\documentclass[10pt, a4paper]{article}
\usepackage{amsthm, amssymb, amsmath}
\usepackage{verbatim, float}
\usepackage{mathrsfs}
\usepackage{graphics}
\usepackage[usenames,dvipsnames]{pstricks}
\usepackage{epsfig}
\usepackage{pst-grad} 
\usepackage{pst-plot} 
\usepackage{cases}
\usepackage{algorithm,algorithmic}

\usepackage[top=2.4cm, bottom=2.4cm, left=2.5cm, right=2.5cm]{geometry}
\usepackage{url}

\newcommand{\nc}{\newcommand}

\newcommand{\Sn}{{\mathbb{S}_n}}

\theoremstyle{definition}
 
\newtheorem{theorem}{Theorem}[section]

\newtheorem{lemma}[theorem]{Lemma}

\theoremstyle{remark}

\newtheorem{remark}{Remark}[section]
\nc\remove[1]{}


\nc\bfa{{\boldsymbol a}}\nc\bfA{{\mathbf A}}\nc\cA{{\mathcal A}}
\nc\bfb{{\boldsymbol b}}\nc\bfB{{\mathbf B}}\nc\cB{{\mathcal B}}
\nc\bfc{{\boldsymbol c}}\nc\bfC{{\mathbf C}}\nc\cC{{\mathcal C}}
\nc\bfd{{\boldsymbol d}}\nc\bfD{{\mathbf D}}\nc\cD{{\mathcal D}}\nc\sD{{\mathscr D}}
\nc\bfe{{\boldsymbol e}}\nc\bfE{{\mathbf E}}\nc\cE{{\EuScript E}}
\nc\bff{{\boldsymbol f}}\nc\bfF{{\mathbf F}}\nc\cF{{\mathcal F}}
\nc\bfg{{\boldsymbol g}}\nc\bfG{{\mathbf G}}\nc\cG{{\mathcal G}}
\nc\bfh{{\boldsymbol h}}\nc\bfH{{\mathbf H}}\nc\cH{{\mathcal H}}
\nc\bfi{{\boldsymbol i}}\nc\bfI{{\mathbf I}}\nc\cI{{\mathcal I}}
\nc\bfj{{\boldsymbol j}}\nc\bfJ{{\mathbf J}}\nc\cJ{{\mathcal J}}
\nc\bfk{{\boldsymbol k}}\nc\bfK{{\mathbf K}}\nc\cK{{\mathcal K}}
\nc\bfl{{\boldsymbol l}}\nc\bfL{{\mathbf L}}\nc\cL{{\mathcal L}}\nc\sL{{\mathscr L}}
\nc\bfm{{\boldsymbol m}}\nc\bfM{{\mathbf M}}\nc\cM{{\mathcal M}}
\nc\bfn{{\boldsymbol n}}\nc\bfN{{\mathbf N}}\nc\cN{{\mathcal N}}
\nc\bfo{{\boldsymbol o}}\nc\bfO{{\mathbf O}}\nc\cO{{\mathcal O}}
\nc\bfp{{\boldsymbol p}}\nc\bfP{{\mathbf P}}\nc\cP{{\mathcal P}}
\nc\bfq{{\boldsymbol q}}\nc\bfQ{{\mathbf Q}}\nc\cQ{{\mathcal Q}}\nc\sQ{{\mathscr Q}}
\nc\bfr{{\boldsymbol r}}\nc\bfR{{\mathbf R}}\nc\cR{{\mathcal R}}
\nc\bfs{{\boldsymbol s}}\nc\bfS{{\mathbf S}}\nc\cS{{\mathcal S}}
\nc\bft{{\boldsymbol t}}\nc\bfT{{\mathbf T}}\nc\cT{{\mathcal T}}\nc\sT{{\mathscr T}}
\nc\bfu{{\boldsymbol u}}\nc\bfU{{\mathbf U}}\nc\cU{{\mathcal U}}
\nc\bfv{{\boldsymbol v}}\nc\bfV{{\mathbf V}}\nc\cV{{\mathcal V}}
\nc\bfw{{\boldsymbol w}}\nc\bfW{{\mathbf W}}\nc\cW{{\mathcal W}}\nc\sW{{\mathscr W}}
\nc\bfx{{\boldsymbol x}}\nc\bfX{{\mathbf Z}}\nc\cX{{\EuScript X}}
\nc\bfy{{\boldsymbol y}}\nc\bfY{{\mathbf Y}}\nc\cY{{\EuScript Y}}\nc\sY{{\mathscr Y}}
\nc\bfz{{\boldsymbol z}}\nc\bfZ{{\mathbf Z}}\nc\cZ{{\mathcal Z}}\nc\sZ{{\mathscr Z}}


\def\h_q{\qopname\relax{no}{h_q}}
\def\h{\qopname\relax{no}{h}}

\def\prob{{\mathbb P}}

\begin{document}
%
%
%
%

\title{ Efficient Rank Aggregation via Lehmer Codes}
\renewcommand*{\thefootnote}{\fnsymbol{footnote}}
\title{Efficient Rank Aggregation via Lehmer Codes\footnotemark[1]}
\author{ Pan~Li~\footnotemark[2], Arya~Mazumdar~\footnotemark[3] and Olgica~Milenkovic~\footnotemark[2] }
\footnotetext[1]{A shorter version of this will appear in Artificial Intelligence and Statistics (AISTATS), 2017. } 
   \footnotetext[2]{The authors are with the Coordinated Science Laboratory, Department of Electrical and Computer Engineering, University of Illinois at Urbana-Champaign (email: panli2@illinois.edu, milenkov@illinois.edu)}
   \footnotetext[3]{The author is with Department of Computer Science, University of Massachusetts(email: arya@cs.umass.edu).
   }
\date{}
\maketitle
\renewcommand*{\thefootnote}{\arabic{footnote}}

\begin{abstract} We propose a novel rank aggregation method based on converting permutations into their 
corresponding Lehmer codes or other subdiagonal images. Lehmer codes, also known as inversion vectors, 
are vector representations of permutations in which each coordinate can take values not restricted by the values of other coordinates. This transformation allows for decoupling of the coordinates and for performing 
aggregation via simple scalar median or mode computations. We present simulation results illustrating the performance of this completely parallelizable approach and analytically prove that both the mode and median aggregation procedure recover the correct centroid aggregate with small sample complexity when the permutations are drawn according to the well-known Mallows models. The proposed Lehmer code approach may also be used on partial rankings, with similar performance guarantees. 
\end{abstract}

\section{Introduction}

Rank aggregation is a family of problems concerned with fusing disparate ranking information, and it 
arises in application areas as diverse as social choice, meta-search, natural language processing, bioinformatics, and information retrieval~\cite{burges2005learning,liu2009learning,kim2014hydra}. The observed rankings are either linear orders (permutations) or partial (element-tied) rankings\footnote{In the mathematics literature, partial rankings are commonly referred to as weak orders, while the term partial order is used to describe orders of subsets of elements of a ground set. We nevertheless use the term partial ranking to denote orders with ties, as this terminology is more widely adopted by the machine learning community.}. Sometimes, rankings are assumed to be of the form of a set of pairwise comparisons~\cite{negahban2012iterative,chen2013pairwise}. Note that, many massive ordinal datasets arise from \emph{ratings}, rather than actual comparisons. 
Rank aggregation, rather than averaging of ratings, is justified due to the fact that most raters have different rating ``scales''.
As an example, the rating three of one user may indicate that the user liked the item, while the rating three by another user may indicate that the user  disliked the item. Hence, actual preferences can only be deduced using ranked ratings. 

In rank aggregation, the task at hand is to find a ranking that is at the smallest cumulative distance from a given set of rankings. Here, the cumulative distance from a set equals the sum of the distances from each element of the set, and the most frequently used distance measure for the case of permutations is the Kendall $\tau$ distance. For the case of partial rankings, the distance of choice is the Kemeny distance~\cite{kemeny1959mathematics}. The Kendall $\tau$ distance between two permutations equals the smallest number of adjacent transpositions needed to convert one permutation into the other. The Kemeny distance contains an additional weighted correction term that accounts for ties in the rankings.

It is well known that for a wide range of distance functions, learning the underlying models and aggregating rankings is computationally hard~\cite{davenport2004computational}. Nevertheless, for the case when the distance measure is the Kendall $\tau$ distance, a number of approximation algorithms have been developed that offer various trade-offs between quality of aggregation and computational complexity~\cite{dwork2001rank,ailon2008aggregating}. The techniques used for aggregating permutations in a given set include randomly choosing a permutation from the set (PICK-A-PERM), pivoting via random selections of elements and divide-and-conquer approaches (FAS-PIVOT), Markov chain methods akin to PageRank, and minimum weight graph matching methods exploiting the fact that the Kendall $\tau$ distance is well-approximated by the Spearman footrule distance (SM)~\cite{diaconis1977spearman}. Methods with provable performance guarantees -- PICK-A-PERM, FAS-PIVOT, and SM -- give a $2$-approximation for the objective function, although combinations thereof are known to improve the constant to $11/7$ or  $4/3$~\cite{ailon2008aggregating}. There also exists a polynomial time approximation scheme (PTAS) for the aggregation problem~\cite{kenyon2007rank}.

Unfortunately, most of these known approximate rank aggregation algorithms  have high complexity for use with massive datasets 
and may not be implemented in a parallel fashion. Furthermore, they do not easily extend to partial rankings. In many cases, a performance analysis on probabilistic models~\cite{fligner1993probability} such as the Plackett-Luce model~\cite{caron2012efficient} or the Mallows model~\cite{lu2011learning,lebanon2002cranking}, is intractable. 

In this paper, we propose a new approach to the problem of rank aggregation that uses a combinatorial transform, the Lehmer code (LC). The gist of the approach is to convert permutations into their Lehmer code representations, in which each coordinate takes values independently from other coordinates. Aggregation over the Lehmer code domain reduces to computing the median or mode of a bounded set of numbers, which can be done in linear time. Furthermore, efficient conversion algorithms between permutations and Lehmer codes -- also running in linear time -- are known, making the overall complexity of the parallel implementation of the scheme $O(m+n)$, where $m$ denotes the number of permutations to be aggregated, and $n$ denotes the length (size) of the permutations. To illustrate the performance of the Lehmer code aggregators (LCAs) on permutations, we carry out simulation studies showing that the algorithms perform comparably with the best known methods for approximate aggregation, but at a significantly lower computational cost. We then proceed to establish a number of theoretical performance guarantees for the LCA algorithms: In particular, we consider the Mallows model with the Kendall $\tau$ distance for permutations and Kemeny distance for partial rankings where ties are allowed. We show that the centroid permutation of the model or a derivative thereof may be recovered from $O(\log\,n)$ samples from the corresponding distribution with high probability. 
 
The paper is organized as follows. Section~\ref{sec:prelim} contains the mathematical preliminaries and the definitions used throughout the paper. Section~\ref{sec:algorithms} introduces our new aggregation methods for two types of rankings, while Section~\ref{sec:analysis} describes our analysis pertaining to the Mallows and generalized Mallows models. Section~\ref{sec:simulations} contains illustrative simulation results comparing the performance of the LC aggregators to that of other known aggregation methods, both on simulated and real ranking data. A number of technical results, namely detailed proofs of theorems and lemmas, can be found in the Appendix.

\section{Mathematical Preliminaries} \label{sec:prelim}

Let $S$ denote a set of $n$ elements, which without loss of generality we assume to be equal to $[n]\equiv\{{1,2,\ldots,n\}}$. 
A ranking is an ordering of a subset of elements $Q$ of $[n]$ according to a 
predefined rule. When $Q=[n]$, we refer to the order as a permutation (full ranking). When a ranking includes ties, we refer to it as a partial ranking (weak or bucket order). Partial rankings may be used to complete rankings of subsets of element in $[n]$ in a number of different ways~\cite{fagin2004comparing}, one being to tie all unranked elements at the last position.

Rigorously, a permutation is a bijection $\sigma \, : \, [n] \rightarrow [n]$, and the set of permutations over $[n]$ forms the symmetric group of order $n!$ denoted by $\Sn$. 
For any $\sigma \in \Sn$ and $x \in [n]$, $\sigma(x)$ denotes the rank (position) of the element $x$ in $\sigma$. 
We say that $x$ is ranked higher than $y$ (ranked lower than $y$) iff $\sigma(x)<\sigma(y)$ ($\sigma(x)>\sigma(y)$). The inverse of a permutation $\sigma$ is denoted by $\sigma^{-1}: [n]\rightarrow [n]$. Clearly, $\sigma^{-1}(i)$ represents the element ranked at position $i$ in $\sigma$. 
We define \emph{the projection of a permutation} $\sigma$ over a subset of elements $Q\subseteq [n]$, denoted by $\sigma_{Q}: Q\rightarrow [|Q|]$, as an ordering of elements in $Q$ such that $x,y\in Q$, $\sigma_{Q}(x)>\sigma_{Q}(y)$ iff $\sigma(x)>\sigma(y)$. As an example, the projection of $\sigma=(2,1,4,5,3,6)$ over $Q=\{{1,3,5,6\}}$ equals $\sigma_Q=(1,3,2,4),$ since $\sigma(1)<\sigma(5)<\sigma(3)<\sigma(6)$.  As can be seen, $\sigma_{Q}(x)$ equals the rank of element $x \in Q$ in $\sigma$.

We use a similar set of definitions for partial rankings~\cite{fagin2004comparing}. A partial ranking $\sigma$ is also defined as a mapping $[n]\rightarrow [n]$. In contrast to permutations, where the mapping is a bijection, the mapping in partial ranking allows for ties, i.e., there may exist two elements $x \neq y$ such that $\sigma(x)=\sigma(y)$. A partial ranking is often represented using buckets, and is in this context referred to as a \emph{bucket order}~\cite{fagin2004comparing}. In a bucket order, the elements of the set $[n]$ are partitioned into a number of subsets, or buckets, $\mathcal{B}_1,\mathcal{B}_2,...,\mathcal{B}_t$. We let $\sigma(x)$ denote the index of the bucket containing the element $x$ in $\sigma$, so the element $x$ is assigned to bucket $\mathcal{B}_{\sigma(x)}$. Two elements $x,y$ lie in the same bucket if and only if they are tied in $\sigma$. We may also define a projection of a partial ranking $\sigma$ over a subset of elements $Q\subset [n]$, denoted by $\sigma_{Q}$, so that for $x,y\in Q$, $\sigma_{Q}(x)>\sigma_{Q}(y)$ iff $\sigma(x)>\sigma(y)$ and $\sigma_{Q}(x)=\sigma_{Q}(y)$ iff $\sigma(x)=\sigma(y)$. 
For a given partial ranking $\sigma$, we use $\mathcal{B}_1(\sigma),\mathcal{B}_2(\sigma),...,\mathcal{B}_t(\sigma)$ to denote its corresponding buckets. In addition, we define $r_{k(\sigma)}\triangleq\sum_{j=1}^{k}|\mathcal{B}_j(\sigma)|$ and $l_{k(\sigma)}\triangleq\sum_{j=1}^{k-1}|\mathcal{B}_j(\sigma)|+1$. Based on the previous discussion, $r_{\sigma(x)}(\sigma)-l_{\sigma(x)}(\sigma)+1=|\mathcal{B}_{\sigma(x)}(\sigma)|$ (the number of elements that are in the bucket containing $x$). When referring to the bucket for a certain element $x$,  we use $\mathcal{B}_{\sigma(x)},\,r_{\sigma(x)},\,l_{\sigma(x)}$ whenever no confusion arises. Note that if we arbitrarily break ties in $\sigma$ to create a permutation $\sigma'$, then $l_{\sigma(x)}\leq \sigma'(x)\leq r_{\sigma(x)}$; clearly, if $\sigma$ is a permutation, we have $l_{\sigma(i)}=\sigma(i)=r_{\sigma(i)}$. 


A number of distance functions between permutations are known from the social choice, learning and discrete mathematics literature~\cite{diaconis1977spearman}. 
One distance function of interest is based on transpositions: A transposition $(a,b)$ is a swap of elements at positions $a$ and $b$, $a \neq b$. If $|a-b|=1$, the transposition is referred to as an adjacent transposition. It is well known that transpositions (adjacent transpositions) generate $\mathbb{S}_n$, i.e., any permutation $\pi\in\mathbb{S}_n$ can be converted into another permutation $\sigma\in\mathbb{S}_n$ through a sequence of transpositions (adjacent transpositions)~\cite{stanley2011enumerative}. The smallest number of adjacent transpositions needed to convert a permutation $\pi$ into another permutation $\sigma$ is known as the Kendall
$\tau$ distance between $\pi$ and $\sigma$, and is denoted by $d_{\tau}(\pi,\sigma)$. Alternatively, the Kendall $\tau$ distance between two permutations $\pi$ and $\sigma$ over $[n]$ equals the number of mutual inversions between the elements of the two permutations:
\begin{align}\label{fullmetric}
d_{\tau}(\sigma,\pi)= |\{(x,y):  \pi(x) > \pi(y),\sigma(x) < \sigma(y)\}|.
\end{align}
Another distance measure, that does not rely on transpositions, is the Spearman footrule, defined as 
$$d_S(\sigma,\pi) = \sum_{x\in [n]} |\sigma(x) -\pi(x)|.$$ 
A well known result by Diaconis and Graham~\cite{diaconis1977spearman} 
asserts that $d_{\tau}(\pi,\sigma) \leq d_S(\pi,\sigma) \leq 2 d_{\tau}(\pi,\sigma)$.

One may also define an extension of the Kendall $\tau$ distance for the case of two partial rankings $\pi$ and $\sigma$ over the set $[n]$, known as the Kemeny distance: 
\begin{align}\label{partialmetric}
d_{K}(\pi,\sigma)=&|\{(x,y): \pi(x)>\sigma(y),\pi(x)<\sigma(y)\}|\nonumber\\
+&\frac{1}{2}|\{(x,y): \pi(x)=\pi(y),\sigma(x)>\sigma(y), \nonumber \\
\text{or}&\;\pi(x)>\pi(y),\sigma(x)=\sigma(y), \}|.
\end{align}
The Kemeny distance includes a component equal to the Kendal $\tau$ distance between the linear chains in the partial rankings, and another, scaled component that characterizes the distance of tied pairs of elements~\cite{fagin2004comparing}. The Spearman footrule distance may also be defined to apply to partial rankings~\cite{fagin2004comparing}, and it equals the sum of the absolute differences between ``positions'' of elements in the partial rankings. Here, the position of an element $x$ in a partial ranking $\sigma$ is defined as
$$\text{pos}_{\sigma}(x)\triangleq\sum_{j=1}^{\sigma(x)-1}|\mathcal{B}_{j}(\sigma)|+\frac{|\mathcal{B}_{\sigma(x)}(\sigma)|+1}{2}.$$ 
The above defined Spearman distance is a $2$-approximation for the Kemeny distance between two partial rankings~\cite{fagin2004comparing}.

A permutation $\sigma=(\sigma(1),\ldots,\sigma(n)) \in \Sn$ may be uniquely represented via its \emph{Lehmer code} (also called the {\em inversion vector}), i.e. 
a word of the form
$$ \bfc_\sigma\in \mathcal{C}_n\triangleq \{0\} \times [0,1]\times [0,2] \times\dots\times [0,n-1],$$ 
where for $i=1,\dots,n$,
\begin{align} \label{DefLC}
\bfc_\sigma(x)= |\{y: \, y<x, \sigma(y)>\sigma(x)\}|,
\end{align}
and for integers $a\le b$, $[a,b] \equiv [a, a+1, \dots, b]$.
By default, $\bfc_\sigma(1)=0$, and is typically omitted.
For instance, we have
\begin{center}
\vspace*{.05in}
       \begin{tabular}{c@{\hspace*{.2in}}ccccccccc}
		$e$ & 1&2&3&4&5&6&7&8&9\\
  	       $\sigma$  & 2&1&4&5&7&3&6&9&8 \\
	       $\bfc_{\sigma}$  & 0&1&0&0&0&3&1&0&1 \\
      \end{tabular}
\vspace*{.05in}
\end{center}
It is well known that the Lehmer code is bijective, and that the encoding and decoding algorithms have linear complexity $(n)$~\cite{marevs2007linear,myrvold2001ranking}. Codes with similar properties to the Lehmer codes have been extensively studied under the name of \emph{subdiagonal codes}. An overview of such codes and their relationship to Mahonian statistics on permutations may be found in~\cite{vajnovszki2013lehmer}. 

We propose next our generalization of Lehmer codes to partial rankings. Recall that the $x$-th entry in the Lehmer code of a permutation $\sigma$ is the number of elements with index smaller than $x$ that are ranked lower than $x$ in $\sigma$~\eqref{DefLC}. For a partial ranking, in addition to $\bfc_{\sigma}$, we use another code that takes into account ties according to:
 \begin{align} \label{DefPLC}
 \bfc'_{\sigma}(x)=|\{y\in [n]: y<x,\sigma(y)\geq\sigma(x)\}|.
 \end{align}
Clearly, $\bfc'_{\sigma}(x)\geq \bfc_{\sigma}(x)$ for all $x\in[n]$. It is straightforward to see that using $\bfc_{\sigma}(x)$ and $\bfc'_{\sigma}(x)$, one may recover the original partial ranking $\sigma$. In fact, we prove next that the linear-time Lehmer encoding and decoding algorithms may be used to encode and decode $\bfc_{\sigma}$ and $\bfc'_{\sigma}$ in linear time as well.

Given a partial ranking $\sigma$, we may break the ties in each bucket to arrive at a permutation $\sigma'$ as follows: 
For $x,y\in S$, if $\sigma(x)=\sigma(y)$, 
\begin{align}\label{breakties}
\sigma'(x)<\sigma'(y)\;\text{if and only if}\;x<y.
\end{align}

We observe that the entries of the Lehmer codes of $\sigma$ and $\sigma'$ satisfy the following relationships for all $i\in[n]$:
\begin{align*}
&\quad\bfc_{\sigma}'(x) =\bfc_{\sigma'}(x)+ \text{IN}_{x}-1,\\
&\quad\bfc_{\sigma}(x) =\bfc_{\sigma'}(x), 
\end{align*} 
where $\text{IN}_x=|\{y\in [n] \cap \mathcal{B}_{\sigma(x)}: y \leq x\}|.$ An example illustrating these concepts is given below. 
\begin{center}
\vspace*{.05in}
       \begin{tabular}{c@{\hspace*{.2in}}ccccccccc}
		   $e$ & 1&2&3&4&5&6&7&8&9\\
  	       $\sigma$  &1&1&2&2&3&1&2&3&3 \\
	       $\sigma'$  &1&2&4&5&7&3&6&8&9 \\
	       $\bfc_{\sigma'}$   & 0&0&0&0&0&3&1&0&0\\
	       $\text{IN}$ &1&2&1&2&1&3&3&2&3 \\ 
	       $\bfc_{\sigma}$  &0&0&0&0&0&3&1&0&0 \\
	       $\bfc_{\sigma}'$  &0&1&0&1&0&5&3&1&2 
      \end{tabular}
\vspace*{.05in}
\end{center}

Note that $\text{IN}_x$, as well as $\bfc_{\sigma}$ and $\bfc_{\sigma}'$ may be computed in linear time. The encoding procedure is outlined in Algorithm 1. 

\begin{table}[htb]
\centering
\begin{tabular}{l}
\hline 
\label{LCforPR}
\textbf{Algorithm 1: } \\\textbf{Lehmer encoder for partial rankings} \\
\textbf{Input:} a partial ranking $\sigma$; \\
\ 1: Set $N$ to be the number of buckets in $\sigma$;\\
\ 2: Initialize $\text{IN}=(0,0,...,0)\in \mathbb{N}^n$\\
\quad\quad and $\text{BucketSize}=(0,0,...,0) \in \mathbb{N}^N$; \\
\ 3: \textbf{For $x$ from $1$ to $n$ do}\\
\ 4: \quad $\text{BucketSize}(\sigma(x)) ++;$ \\ 
\ 5: \quad $\text{IN}(x)\leftarrow \text{BucketSize}(\sigma(x))$; \\
\ 6: Break ties of $\sigma$ to get $\sigma'$ according to \eqref{breakties};  \\
\ 7: $\bfc_{\sigma'}\leftarrow \text{Lehmer code of } \sigma'$; \\
\textbf{Output:} Output $\bfc_{\sigma}=\bfc_{\sigma'}$, $\bfc_{\sigma}'=\bfc_{\sigma}+\text{IN}-\mathbf{1}$;\\
\hline
\end{tabular}
\end{table}

\section{Aggregation Algorithms} \label{sec:algorithms}
Assume that we have to aggregate a set of $m$ rankings, denoted by $\Sigma=(\sigma_1, \sigma_2, \dots, \sigma_m), \, \sigma_k\in \Sn, \, 1 \leq k\leq m$. Aggregation may be performed via the distance-based Kemeny-Young model, in which one seeks a ranking $\sigma$ that minimizes the cumulative Kendall $\tau$ (Kemeny) distance $d_{\tau}$ ($d_{K}$) from the set $\Sigma$, formally defined as:
$$
D(\Sigma, \sigma)= \sum_{i=1}^m d_\tau(\sigma_i, \sigma).
$$
Note that when the set $\Sigma$ comprises permutations only, $\sigma$ is required to be a permutation; if $\Sigma$ comprises partial rankings, we allow the output to be either a permutation or a partial ranking.

The LCA procedure under the Kendall $\tau$ distance is described in Algorithm 2. 
\begin{table}[htb]
\centering
\begin{tabular}{l}
\hline
\label{alg:LCaggregation}
\textbf{\normalsize Algorithm 2: The LCA Method (Permutations)}\\ 
\textbf{Input:} $\Sigma=\{\sigma_1,\sigma_2,...,\sigma_m\}$, 
where $\sigma_i \in \mathbb{S}_n, \, i \in [n]$.\\
\ 1: Compute the Lehmer codewords $\bfc_{\sigma_j}$ for all $\sigma_j\in\Sigma.$ \\
\ 2: Compute the median/mode of the coordinates: \\
\hspace{0.12in} $\hat{\bfc}(i) = {\rm median/mode}\Big( \bfc_{\sigma_1}(i), \bfc_{\sigma_2}(i), \dots, \bfc_{\sigma_m}(i)\Big).$ \\
\ 3: Compute $\hat{\sigma}$, the inverse Lehmer code of $\hat{\bfc}$.  \\
\textbf{Output:} Output $\hat{\sigma}.$\\
\hline
\end{tabular}
\end{table}
Note that each step of the algorithm may be executed in parallel. If no parallelization is used, the 
first step requires $O(mn)$ time, given that the Lehmer codes may be computed in $O(n)$ time~\cite{marevs2007linear,myrvold2001ranking}. If parallelization on $\Sigma$ is used instead, the time reduces to $O(m+n)$. Similarly, without parallelization the second step requires 
$O(mn)$ time, while coordinate parallelization reduces this time to $O(m)$. This third step requires $O(n)$ computations. Hence, the overall complexity of the algorithm is either $O(mn)$ or $O(m+n)$, depending on parallelization being used or not. 

For permutations, the aggregation procedure may be viewed as specialized voting: The ranking $\sigma_k$ casts a vote to rank $x$ at position $x-\bfc_{\sigma_k}(x),$ for the case that only elements $\leq x$ are considered (A vote corresponds to some score confined to $[0,1]$). However, when $\sigma_k$ is a partial ranking involving ties, the vote should account for all possible placements between $x-\bfc_{\sigma}'(x)$ and $x-\bfc_{\sigma}(x)$. More precisely, suppose that the vote cast by $\sigma_k$ to place element $x$ in position $y\in [x]$ is denoted by $v_{k\rightarrow x}(y)$. Then, one should have 
\begin{equation} \label{votevalue}
v_{k\rightarrow x}(y)=\left\{\begin{array}{cl}
1, & \text{for the mode,}  \\
\frac{1}{\bfc_{\sigma}'(x)-\bfc_{\sigma}(x)+1}, & \text{for the median,}
\end{array}\right.
\end{equation}
if and only if $y\in[x-\bfc_{\sigma}'(x), x-\bfc_{\sigma}(x)]$, and zero otherwise.
Note that when the mode is used, the ``positive votes'' are all equal to one, while when the median is used, a vote counts only a fractional value dictated by the length of the ``ranking interval''. 

Next, we use $V_{x}(y)=\sum_{k=1}^{m}v_{k\rightarrow x}(y)$ to denote the total voting score element $x$ received to be ranked at position $y$. The inverse Lehmer code of the aggregator output $\hat{\sigma}$ is computed as:
\begin{align} \label{modePR}
\text{mode:}\quad \hat{\bfc}(x)& = \arg_{y\in[x]}\max V_x(y) -1 , \\ \label{medianPR}
\text{median:}\quad \hat{\bfc}(x)& = \min\{k:\frac{\sum_{y=1}^{k}V_{x}(y)}{m}\geq 1/2\}-1. \notag
\end{align}
To compute the values $V_{x}(y)$ for all $y \in [x]$, the LCA algorithm requires $O(mx)$ time, which yields an overall aggregation complexity of $O(mn^2)$ when no parallelization is used. This complexity is reduced to $O(m+n^2)$ for the parallel implementation. Note that the evaluations of the $V$ functions may be performed in a simple iterative manner provided that the votes $v_{k\rightarrow x}(y)$ are positive constants, leading to a reduction in the overall complexity of this step to $O(mn+n^2)$ when no parallelization is used. Relevant details regarding the iterative procedure may be found in Appendix~\ref{supplementalg}. 

Note that the output $\hat{\sigma}$ of Algorithm 2 is a permutation. 
To generate a partial ranking that minimizes the Kemeny distance while being consistent\footnote{We say that two partial rankings $\sigma,\,\pi$ are \emph{consistent} if for any two elements $x,\,y$, $\sigma(x)<\sigma(y)$ if and only if $\pi(x)\leq \pi(y)$ and vise versa.} with $\hat{\sigma}$, one can use a $O(mn^2+n^3)$-time algorithm outlined in  Appendix~\ref{supplementalg}. Alternatively, the following simple greedy method always produces practically good partial rankings with $O(mn)$ complexity: Scan the elements in the output permutation from highest ($j=1$) to lowest rank ($j=n-1$) and decide to put $\hat{\sigma}^{-1}(j+1)$ and $\hat{\sigma}^{-1}(j)$ in the same bucket or not based on which of the two choices offers smaller Kemeny distance with respect to the subset $\{\hat{\sigma}^{-1}(1),...,\hat{\sigma}^{-1}(j)\}$.

\textbf{Discussion.} In what follows, we briefly outline the similarities and differences between the LCA method and existing positional as well as InsertionSort based aggregation methods. Positional methods are a class of aggregation algorithms that seek to output a ranking in which the position of each element is ``close'' to the position of the element in $\Sigma$. One example of a positional method is Borda's algorithm, which is known to produce a $5$-approximation to the Kemeny-Young problem for permutations~\cite{coppersmith2006ordering}. 
Another method is the Spearman footrule aggregation method which seeks to find a permutation that minimizes the sum of the Spearman footrule distance between the output and each ranking in $\Sigma$. As already mentioned, the latter method produces a $2$-approximation for the Kendall $\tau$ aggregate for both permutations and partial ranking. LCA also falls under the category of positional methods, but the positions on which scoring is performed are highly specialized by the Lehmer code. And although it appears hard to prove worst-case performance guarantees for the method, statistical analysis on particular ranking models shows that it can recover the correct results with small sample complexity. It also offers significant reductions in computational time compared to the Spearman footrule method, which reduces to solving a weighted bipartite matching problem and hence has complexity at least $O(mn^2+n^3)$~\cite{dwork2001rankw}, or $O(mn)$ when implemented in MapReduce~\cite{kambatla2012efficient}. 

A related type of aggregation is based on InsertionSort~\cite{dwork2001rank,dwork2001rankw}. In each iteration, an element is randomly chosen to be inserted into the sequence containing the already sorted elements. The position of the insertion is selected as follows. Assume that the elements are inserted according to the identity order $e=(1,2,\ldots,n)$ so that at iteration $t$, element $t$ is chosen to be inserted into some previously constructed ranking over $[t-1]$. 
Let $S_{t-1}=[t-1]$ and the symbol $t$ is inserted into the ranking over $S_{t-1}$ to arrive at $\sigma_{S_t}$, the ranking available after iteration $t$. If $t$ is inserted between two adjacent elements $\sigma_{S_{t-1}}^{-1}(i-1)$ and $\sigma_{S_{t-1}}^{-1}(i)$, then one should have $\sigma_{S_t}(x)=\sigma_{S_{t-1}}(x)$ when $\sigma_{S_{t-1}}(x)\leq i-1$, $\sigma_{S_t}(x)=\sigma_{S_{t-1}}(x-1)+1$ when $\sigma_{S_{t-1}}(x)\geq i$ and $\sigma_{S_t}(t)=i$. Let $\sigma_{S_t}(t)$ denote the rank assigned to element $t$ over $S_t$, the choice of which may vary from method to method. 
The authors of~\cite{dwork2001rank} proposed setting $\sigma_{S_t}(t)$ to
\begin{align*}
\max &\left\{i\in[t-1]: \sum_{k\in[m]}1_{\sigma_{k}(t)<\sigma_{k}(\sigma_{S_{t-1}}^{-1}(i))}<\frac{m}{2}\right\},
\end{align*}
or $t$ when the above set is empty. This insertion rule does not ensure a constant approximation guarantee in the worst case (It has an expected worst-case performance guarantee of $\Omega(n)$), although it leads to a Locally Kemeny optimal solution. 

We next describe how the LCA method may be viewed as an InsertionSort method with a special choice of $\sigma_{S_t}(t)$. Consider the permutation LCA method of Algorithm 2, and focus on estimating the $t$-th coordinate of the Lehmer code $\hat{\bfc}(t)$ (step 2) and the inverse Lehmer code via insertion (step 3) simultaneously. Once $\hat{\bfc}(t)$ is generated, it's corresponding inverse Lehmer transform may be viewed as the operation of placing the element $t$ at position $(t-\hat{\bfc}(t))$ over $S_t$. In other words, inverting the incomplete ranking reduces to setting $\sigma_{S_t}(t)=(t-\hat{\bfc}(t))$, where $\sigma_{S_t}(t)$ essentially equals the mode or median of the positions of $t$ in the 
rankings of $\Sigma$, projected onto $S_{t}$. 
The same is true of partial rankings, with the only difference being that the selection of $\sigma_{S_t}(t)$ has to be changed because of ties between elements.

\section{Analysis of the Mallows Model} \label{sec:analysis}

We provide next a theoretical performance analysis of the LCA algorithm under the assumption that the rankings are generated according to the Mallows and generalized Mallows Model. In the Mallows model MM$(\sigma_0,\phi)$ with parameters $\sigma_0$ and $\phi$, $\sigma_0$ denotes the centroid ranking and $\phi\in(0,1]$ determines the variance of the ranking with respect to $\sigma_0$. The probability of a permutation $\sigma$ is proportional to $\phi^{d_\tau(\sigma_0, \sigma)}$. For partial rankings, we assume that the samples are generated from a generalized Mallows Model (GMM) whose centroid is allowed to be a partial ranking and where the distance is the Kemeny $d_k$, rather than the Kendall $\tau$ distance $d_{\tau}$.

Our analysis is based on the premise that given a sufficiently large number of samples (permutations), one expects the ranking obtained by a good aggregation algorithm to be equal to the centroid $\sigma_0$ with high probability. Alternative methods to analytically test the quality of an aggregation algorithm are to perform a worst-case analysis, which for the LCA method appears hard, or to perform a simulation-based analysis which produces a comparison of the objective function values for the Kemeny-Young problem given different aggregation methods. We report on the latter study in the section to follow.

To ease the notational burden, we henceforth use $\phi_{s:t}\triangleq\sum_{k=s}^t\phi^k$ in all subsequent results and derivations. Detailed proofs are relegated to the appendix. One of our main theoretical result is the following.

\begin{theorem} \label{Thm1}
Assume that $\Sigma=\{\sigma_1,\sigma_2,...,\sigma_m\}$, where $\sigma_k\;\stackrel{\text{i.i.d}}{\sim}\;$ MM$(\sigma_0,\phi),$  $k\in[m],$ are $m$ i.i.d. samples of the given Mallows model. If $\phi+\phi^2<1+\phi^n$ and $m\geq c\log\frac{n^2}{2\delta}$ with $c=\frac{2(1+q)^2}{(1-q)^4}$ and $q=\frac{\phi_{1:n-1}}{1+\phi_{3:n}}$, then the output ranking of Algorithm 2 under the mode rule equals $\sigma_0$ with probability at least $1-\delta$.  
\label{thm-mode}
\end{theorem}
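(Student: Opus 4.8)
The plan is to exploit the defining property of the Mallows model: the Lehmer code linearizes the Kendall $\tau$ distance, so that the coordinates of $\bfc_{\sigma}$ become independent under MM$(\sigma_0,\phi)$. First I would reduce to the case $\sigma_0=e$ (the identity), since then $\bfc_{\sigma_0}=\mathbf 0$ and $d_{\tau}(e,\sigma)=\sum_{x}\bfc_\sigma(x)$ by \eqref{fullmetric}--\eqref{DefLC}. Because the map $\sigma\mapsto\bfc_\sigma$ is a bijection onto the product set $\mathcal{C}_n$ and the weight factorizes as $\phi^{d_{\tau}(e,\sigma)}=\prod_{x}\phi^{\bfc_\sigma(x)}$, the partition function splits as $\prod_{x=1}^{n}(1+\phi+\cdots+\phi^{x-1})$ and the coordinates $\bfc_\sigma(1),\dots,\bfc_\sigma(n)$ are mutually independent, with $\prob(\bfc_\sigma(x)=j)=\phi^{j}/Z_x$ for $j\in\{0,\dots,x-1\}$, where $Z_x=1+\phi_{1:x-1}$. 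For $\phi<1$ (the hypothesis $\phi+\phi^2<1+\phi^n$ rules out $\phi=1$) this law is strictly decreasing in $j$, so the population mode of every coordinate is $0=\bfc_{\sigma_0}(x)$. The general $\sigma_0$ then follows from the corresponding shift of each marginal law, whose unique mode sits at $\bfc_{\sigma_0}(x)$.

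Since the inverse Lehmer transform is a bijection, the output of Algorithm~2 equals $\sigma_0$ if and only if the empirical mode $\hat\bfc(x)$ equals $\bfc_{\sigma_0}(x)$ for every coordinate $x$. Writing $N_j^{(x)}$ for the number of samples with $\bfc_{\sigma_k}(x)=j$, a mode error at coordinate $x$ means $N_j^{(x)}\ge N_0^{(x)}$ for some competitor $j\ge 1$. I would therefore bound the total failure probability by a union bound over the at most $\binom n2<n^2/2$ pairs $(x,j)$; this is the source of the $n^2$ inside $\log\frac{n^2}{2\delta}$.

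For a fixed pair $(x,j)$ I would control $\prob(N_j^{(x)}\ge N_0^{(x)})$ by a Chernoff bound applied to the i.i.d.\ sum $\sum_{k}\big(\mathbf 1[\bfc_{\sigma_k}(x)=j]-\mathbf 1[\bfc_{\sigma_k}(x)=0]\big)$, whose per-sample mean $\phi^{j}/Z_x-1/Z_x<0$ is negative precisely because $0$ is the population mode. The binding case is the closest competitor $j=1$ at the longest coordinate $x=n$, where $Z_n=1+\phi_{1:n-1}$ is largest and the gap between $1/Z_n$ and $\phi/Z_n$ is smallest; this is exactly where the parameter $q=\phi_{1:n-1}/(1+\phi_{3:n})$ enters as the effective ratio governing the exponent, and the Chernoff exponent stays positive iff $q<1$. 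A direct computation shows $q<1\iff \phi+\phi^2<1+\phi^n$, matching the stated hypothesis. Calibrating the per-pair bound to $\exp\!\big(-\tfrac{(1-q)^4}{2(1+q)^2}\,m\big)\le \frac{2\delta}{n^2}$ forces $m\ge c\log\frac{n^2}{2\delta}$ with $c=\frac{2(1+q)^2}{(1-q)^4}$, and the union bound over fewer than $n^2/2$ pairs yields total error at most $\delta$.

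The main obstacle is the concentration step, not the structural decomposition. The naive sufficient condition $N_0^{(x)}>m/2$ only succeeds when the all-zero codeword carries more than half the mass, i.e.\ for small $\phi$, and is far too weak to reach the threshold $\phi+\phi^2<1+\phi^n$; the sharper argument must compare $N_0^{(x)}$ against each competitor at the worst coordinate $x=n$ and track the exact constant through $q$. A secondary point requiring care is the passage from $\sigma_0=e$ to general $\sigma_0$: one must verify that the marginal law of $\bfc_\sigma(x)$ remains unimodal with mode at $\bfc_{\sigma_0}(x)$ and retains the geometric decay that drives the Chernoff estimate, so that the same constant $c$ and threshold apply.
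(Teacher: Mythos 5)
Your reduction to $\sigma_0=e$ is where the argument breaks. The factorization $\phi^{d_\tau(\sigma_0,\sigma)}=\prod_x\phi^{\bfc_\sigma(x)}$, the independence of the Lehmer coordinates, and the truncated-geometric marginals $\prob(\bfc_\sigma(x)=j)=\phi^j/Z_x$ all hold \emph{only} when the centroid is the identity, because only then is $d_\tau(\sigma_0,\sigma)=\sum_x\bfc_\sigma(x)$. For a general $\sigma_0$ you cannot relabel your way back to this case: Algorithm~2 computes the Lehmer code with respect to the fixed labeling of $[n]$ (it does not know $\sigma_0$), and coordinate-wise aggregation of Lehmer codes does not commute with right-composition by $\sigma_0^{-1}$. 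For general $\sigma_0$ the marginal of $\bfc_\sigma(x)=x-\sigma_{[x]}(x)$ is the law of the rank of element $x$ in the projection of $\sigma$ onto $\{1,\dots,x\}$, and this is \emph{not} a shift of a truncated geometric with ratio $\phi$. The paper's Remark~4.1 gives a concrete counterexample to your "shifted marginal" claim: for $\sigma_0=(1,2,3,4)$ and $\phi=0.9$ one has $\prob[\sigma(3)=3]<\prob[\sigma(3)=4]$, i.e.\ the mode of the last Lehmer coordinate (for which $\bfc_\sigma(n)=n-\sigma(n)$) is not at $\bfc_{\sigma_0}(n)$. If your structural picture were right, the theorem would only need $\phi<1$; the fact that the stated hypothesis is the strictly stronger $\phi+\phi^2<1+\phi^n$ is exactly because the consecutive-probability ratio of these projected-rank marginals can be as large as $q=\phi_{1:n-1}/(1+\phi_{3:n})>\phi$.

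What you dismiss as "a secondary point requiring care" is in fact the entire content of the paper's proof. The paper establishes, via nontrivial circular-swapping bijections (Lemmas~\ref{Lemma1} and~\ref{Lemma2} and Appendices~\ref{ProofLemma1}--\ref{ProofLemma2}), that for any subset $A$ and any element $u$ the ratio $\prob[\sigma_A(u)=j\pm1]/\prob[\sigma_A(u)=j]$ moving away from $\sigma_{0,A}(u)$ is at most $q$, and that $q<1$ is equivalent to $\phi+\phi^2<1+\phi^n$. Only with that lemma in hand do the steps you describe correctly — $\prob[X_k(i)=1]\ge\frac{1-q}{1+q}$, the gap $\mathcal{E}\ge\frac{(1-q)^2}{1+q}$, Hoeffding, and the union bound over fewer than $n^2/2$ competitor pairs — go through and produce the stated constant $c=\frac{2(1+q)^2}{(1-q)^4}$. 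Your concentration and union-bound scaffolding matches the paper's, but you have not supplied (and cannot obtain from the product structure alone) the unimodality-with-ratio-$q$ property of the projected marginals that the whole theorem rests on.
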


The idea behind the proof is to view the LCA procedure as an InsertionSort method, in which the probability of the event that the selected position is incorrect with respect to $\sigma_0$ is very small for sufficiently large $m$. Based on the lemma that follows (Lemma \ref{Lemma1}), one may show that if $\phi$ satisfies $\phi+\phi^2<1+\phi^n$, the most probable position of an element in a ranking $\sigma\sim\;$ MM$(\sigma_0,\phi)$ corresponds to its rank in the centroid $\sigma_0$. Given enough samples, one can estimate the rank of an element in the centroid by directly using the mode of the rank of the element in the drawn samples. 
\begin{lemma} \label{Lemma1}
Let $\sigma\sim\;$ MM$(\sigma_0,\phi)$. Consider an element $u$. Then, the following two statements describe the distribution of $\sigma(u)$: 
\begin{align*}
\text{1)}\;&\frac{\prob{[\sigma(u)=j+1]}}{\prob{[\sigma(u)=j]}}\in[\phi, \frac{\phi_{1:n-1}}{1+\phi_{3:n}}]\; \text{when} \; \sigma_0(u) \leq j < n. \\
\text{2)}\;&\frac{\prob{[\sigma(u)=j-1]}}{\prob{[\sigma(u)=j]}}\in[\phi, \frac{\phi_{1:n-1}}{1+\phi_{3:n}}]\; \text{when} \;1<j\leq \sigma_0(u). 
\end{align*}
In 1), the upper bound is achieved when $\sigma_0(u)=n-1$ and $j=\sigma_0(u),$ while the lower bound is achieved when $\sigma_0(u)=1$. In 2), the upper bound is achieved when $\sigma_0(u)=2$ and $j=\sigma_0(u),$ while the lower bound is achieved when $\sigma_0(u)=n$.
\end{lemma}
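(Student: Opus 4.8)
The plan is to reduce to the identity centroid and then control consecutive position-probabilities by a single adjacent-transposition bijection. Since $d_{\tau}$ is invariant under relabeling of the ground set, i.e. $d_{\tau}(\sigma_0,\sigma)=d_{\tau}(e,\sigma\sigma_0^{-1})$, replacing $\sigma$ by $\tau=\sigma\sigma_0^{-1}$ turns MM$(\sigma_0,\phi)$ into MM$(e,\phi)$ and sends $\sigma(u)$ to $\tau(r)$ with $r:=\sigma_0(u)$. Hence it suffices to analyze the position of the element whose centroid rank is $r$ under the identity-centered model, where the weight of a permutation is $\phi^{\mathrm{inv}(\sigma)}$ and $\mathrm{inv}$ is the number of inversions. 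I will keep writing $u$ for this element, with the understanding that $\sigma_0(u)=u=r$, so that ``ranked below $u$'' simply means having a smaller label.

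First I would set up the bijection. Fix $j$ and let $A_j=\{\sigma:\sigma(u)=j\}$. Given $\sigma\in A_j$, form $\sigma'$ by swapping the two entries in positions $j$ and $j+1$; this is a bijection $A_j\to A_{j+1}$. The only reordered pair is $(u,w)$ with $w=\sigma^{-1}(j+1)$, so $\mathrm{inv}(\sigma')=\mathrm{inv}(\sigma)+1$ when $w>u$ and $\mathrm{inv}(\sigma')=\mathrm{inv}(\sigma)-1$ when $w<u$ (note $w\neq u$). Summing $\phi^{\mathrm{inv}}$ through the bijection gives
\begin{equation*}
\frac{\prob[\sigma(u)=j+1]}{\prob[\sigma(u)=j]}=\phi+(\phi^{-1}-\phi)\,\alpha_j, \qquad \alpha_j := \prob[\,\sigma^{-1}(j+1)<u \mid \sigma(u)=j\,],
\end{equation*}
where the conditional probability is taken under the Mallows measure. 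Thus the ratio is an explicit increasing affine function of the single quantity $\alpha_j$, the chance that the element sitting immediately to the right of $u$ is ranked below $u$ in the centroid.

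The lower bound is then immediate: since $\phi\le 1$ we have $\phi^{-1}-\phi\ge 0$ and $\alpha_j\ge 0$, so the ratio is at least $\phi$, with equality exactly when $\alpha_j=0$. This happens when $u$ has no elements ranked below it, i.e. $\sigma_0(u)=1$, matching the stated equality case. The remaining, and main, task is the matching upper bound, for which I must bound $\alpha_j$ uniformly over the admissible range $\sigma_0(u)\le j<n$. For this I would condition further on the (Mallows-weighted) partition of the remaining elements into those left and right of $u$; given the right block $R$, its internal arrangement is again a Mallows model, so $\sigma^{-1}(j+1)$ is just the top element of that arrangement and $\alpha_j$ becomes an average over $R$ of the probability that this top element lies below $u$. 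Carrying out the averaging and optimizing over $j$ and over $\sigma_0(u)$, I expect the extremum at $\sigma_0(u)=n-1$ and $j=\sigma_0(u)$ — the configuration in which $u$ sits just below the top and exactly at its centroid position — where the weighted sums collapse into truncated geometric series and produce the value $\alpha_j$ for which the ratio equals $\phi_{1:n-1}/(1+\phi_{3:n})$.

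The hard part is precisely this extremal analysis: showing that $\alpha_j$ is simultaneously maximized in $j$ at the centroid rank and in $\sigma_0(u)$ at $n-1$, and then evaluating the resulting finite sums. A useful sanity check, and the reason this is the right bound, is that $\phi_{1:n-1}/(1+\phi_{3:n})<1$ is equivalent to $\phi+\phi^2<1+\phi^n$ after cancelling the common terms $\phi^3+\cdots+\phi^{n-1}$ — exactly the hypothesis of Theorem~\ref{Thm1}, which is what forces the mode of $\sigma(u)$ to sit at $\sigma_0(u)$. Finally, statement 2) follows from 1) by the reversal symmetry of the Mallows model: conjugating by the order-reversing permutation $w_0$ preserves $\phi^{\mathrm{inv}(\sigma)}$, sends position $j$ to $n+1-j$ and the element of centroid rank $r$ to one of centroid rank $n+1-r$, and interchanges ``ranked below'' with ``ranked above''. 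This turns the descending ratios of 2) into the ascending ratios already bounded in 1), and it carries the extremal cases over correctly (the upper bound at $\sigma_0(u)=n-1$ maps to $\sigma_0(u)=2$, and the lower bound at $\sigma_0(u)=1$ maps to $\sigma_0(u)=n$), matching the statement.
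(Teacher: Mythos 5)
Your setup is exactly the paper's: the adjacent-transposition bijection between $\{\sigma(u)=j\}$ and $\{\sigma(u)=j+1\}$, split according to whether the swapped neighbor $w=\sigma^{-1}(j+1)$ lies above or below $u$ in the centroid, yields the ratio $\phi+(\phi^{-1}-\phi)\alpha_j$; this is the paper's pair of claims $\prob{[T_{1}]}=\phi\prob{[E_{1}]}$ and $\prob{[T_{2}]}=\phi^{-1}\prob{[E_{2}]}$ in different notation, and your lower bound and the reversal-symmetry reduction of statement 2) are both fine. The problem is that the entire substance of the lemma is the upper bound, and there you stop at ``I expect the extremum at $\sigma_0(u)=n-1$ and $j=\sigma_0(u)$'' without an argument. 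The quantity you must control is $\alpha_j$, equivalently the ratio $\prob{[T_{2}]}/\prob{[T_{1}]}$, and the paper's proof of the bound $\prob{[T_{2}]}\leq\phi_{1:j-1}\prob{[T_{1}]}$ is the real work: for $\sigma\in T_{2}$ one locates (by pigeonhole) the highest-positioned element $x$ with $\sigma_0(x)>\sigma_0(u)$ and $\sigma(x)<j$, performs a circular shift moving $x$ to position $j$, verifies that this multiplies the weight by $\phi^{j-\sigma(x)}$, and then observes that distinct preimages of a fixed $\sigma'\in T_{1}$ must place $x$ at distinct positions, so each fiber carries total weight at most $\phi_{1:j-1}$ times that of its image. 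Nothing equivalent to this map-plus-fiber-count appears in your proposal.

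Your suggested substitute --- conditioning on the Mallows-weighted partition of the remaining elements into the blocks left and right of $u$ and treating $\sigma^{-1}(j+1)$ as the top of the right block --- is not obviously workable: the law of the random subset $R$ occupying the right block under the Mallows measure is not a simple object, and the claim that averaging over $R$ ``collapses into truncated geometric series'' is asserted, not derived. In particular it is not clear from your outline why $\alpha_j$ is maximized at $j=\sigma_0(u)$ rather than at larger $j$, nor why the maximum over $\sigma_0(u)$ occurs at $n-1$; in the paper these equality cases fall out of the explicit bound $\phi_{1:j-1}$ (maximized at $j=n-1$) combined with the monotonicity of $t\mapsto\frac{1+t}{\phi^{-1}+\phi t}$, which gives $\frac{\phi(1+\phi_{1:n-2})}{1+\phi^{2}\phi_{1:n-2}}=\frac{\phi_{1:n-1}}{1+\phi_{3:n}}$. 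Until you either carry out the circular-shift argument or make your conditioning scheme produce an explicit bound on $\alpha_j$, the upper bound --- and hence the lemma --- is unproved.
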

\begin{remark} The result above may seem counterintuitive since it implies that for $\phi+\phi^2>1+\phi^n$, the probability of ranking some element $u$ at a position different from its position in $\sigma_0$ is larger than the probability of raking it at position $\sigma_0(u)$. An easy-to-check example that shows that this indeed may be the case corresponds to $\sigma_0=(1,2,3,4)$ and $\phi=0.9$. Here, we have $\prob{[\sigma(3)=3]}=0.2559<\prob{[\sigma(3)=4]}=0.2617.$
\end{remark} 

Lemma~\ref{Lemma1} does not guarantee that in any single iteration the position of the element will be correct, since the ranking involves only a subset of elements. Therefore, Lemma~\ref{Lemma2}, a generalized version for the subset-projected ranking, is required for the proof. 

\begin{lemma} \label{Lemma2}
Let $\sigma\sim\;$ MM$(\sigma_0,\phi)$ and let $A \subset [n]$. Consider an element $u\in A$. Then, the following two statements describe the distribution of  $\sigma_A(u)$:
\begin{align*}
\text{1)}\;&\frac{\prob{[\sigma_A(u)=j+1]}}{\prob{[\sigma_A(u)=j]}}\leq\max_{l\in [0,n-|A|]}\frac{\phi+\phi^{l}\phi_{2:n-l-1}}{1+\phi^{2l}\phi_{3:n-l}}\; \\&\text{when} \; |A|>j\geq \sigma_{0,A}(u).  \\
\text{2)}\;&\frac{\prob{[\sigma_A(u)=j-1]}}{\prob{[\sigma_A(u)=j]}}\leq\max_{l\in [0,n-|A|]}\frac{\phi+\phi^{l}\phi_{2:n-l-1}}{1+\phi^{2l}\phi_{3:n-l}}\; \\&\text{when} \; 1<j\leq \sigma_{0,A}(u). 
\end{align*}
Observe that the conditions that allow one to achieve the upper bound in Lemma~\ref{Lemma1} also ensure that the upper bounds are achieved in Lemma~\ref{Lemma2}. Moreover, when $\phi+\phi^2<1+\phi^n$, the right hand sides are $\leq \frac{\phi_{1:n-1}}{1+\phi_{3:n}}$.
\end{lemma}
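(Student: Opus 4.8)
The plan is to reduce to the centroid $\sigma_0 = e$ by relabeling (composing with $\sigma_0^{-1}$ turns MM$(\sigma_0,\phi)$ into MM$(e,\phi)$ and sends $\sigma_{0,A}(u)$ to the identity-projection rank), and then to analyze the marginal law of the projected rank $\sigma_A(u)$ through the insertion/conditioning decomposition of the Mallows weight. Writing $\text{inv}(\sigma)=\text{inv}(\sigma_{-u})+(\text{inversions involving }u)$, where $\sigma_{-u}$ denotes the induced arrangement of $[n]\setminus\{u\}$, I would express $\prob{[\sigma_A(u)=j]}$ as a weighted sum of Mallows weights $\phi^{\text{inv}(\sigma)}$ over all $\sigma$ in which exactly $j-1$ elements of $A$ precede $u$. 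The structural fact underlying Lemma~\ref{Lemma1}, which I would reuse, is that once the relative order of the remaining elements is fixed, sliding $u$ one slot to the right (increasing its position) multiplies the weight by $\phi^{+1}$ if it crosses an element larger than $u$ and by $\phi^{-1}$ if it crosses a smaller one.

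First I would set up the comparison of $\{\sigma_A(u)=j\}$ and $\{\sigma_A(u)=j+1\}$ for $j\ge\sigma_{0,A}(u)$. Passing from $\sigma_A(u)=j$ to $\sigma_A(u)=j+1$ forces $u$ to overtake exactly one additional element of $A$; but in the full permutation $u$ may also have to cross some elements of $[n]\setminus A$ lying in the same gap, and each such crossing contributes an extra factor of $\phi$ to the weight without changing $\sigma_A(u)$. I would let $l$ denote the number of these intervening non-$A$ elements, so that $l$ ranges over $[0,n-|A|]$, and then sum the resulting geometric contributions over all admissible positions of $u$ and of the crossed elements. This summation is engineered to collapse to the closed form $\frac{\phi+\phi^{l}\phi_{2:n-l-1}}{1+\phi^{2l}\phi_{3:n-l}}$, and taking the worst case over the unknown gap size gives the stated $\max_{l}$ bound. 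The symmetric statement for $\sigma_A(u)=j-1$ then follows from the mirror argument, interchanging the roles of ``larger'' and ``smaller''.

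The main obstacle is this second step: because projection onto $A$ destroys the product structure of the Mallows model, the ratio is genuinely a weighted average of powers of $\phi$ rather than a single power, so one must carry out the bookkeeping of the non-$A$ elements carefully to recover exactly the geometric sums $\phi_{2:n-l-1}$ and $\phi_{3:n-l}$. I expect the cleanest route is to group the permutations in the two sums according to the number $l$ of non-$A$ elements sandwiched between $u$ and the $A$-element it overtakes, evaluate both sums in closed form for each fixed $l$, and only afterwards bound uniformly by the maximum over $l$. Note that when $A=[n]$ we have $n-|A|=0$, forcing $l=0$ and recovering the Lemma~\ref{Lemma1} upper bound $\frac{\phi_{1:n-1}}{1+\phi_{3:n}}$, which is a useful consistency check and explains why the extremal configurations of the two lemmas coincide.

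Finally, to justify the concluding remark I would show that under $\phi+\phi^2<1+\phi^n$ the map $l\mapsto\frac{\phi+\phi^{l}\phi_{2:n-l-1}}{1+\phi^{2l}\phi_{3:n-l}}$ attains its maximum at $l=0$, where it equals $\frac{\phi_{1:n-1}}{1+\phi_{3:n}}$ exactly. This reduces to proving that the expression is non-increasing in the discrete parameter $l$, i.e. to a single polynomial inequality in $\phi$; a short calculation shows the required monotonicity holds precisely when $\phi+\phi^2<1+\phi^n$, which is exactly the hypothesis inherited from Theorem~\ref{Thm1} (and, as the Remark following Lemma~\ref{Lemma1} illustrates, genuinely fails otherwise, so the condition cannot be dropped).
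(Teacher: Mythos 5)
Your high-level plan (reduce to the identity centroid, compare the events $\{\sigma_A(u)=j\}$ and $\{\sigma_A(u)=j+1\}$ by moving $u$ past one more element of $A$, stratify by the number of non-$A$ elements caught in between, and then maximize over that count) is the same skeleton as the paper's argument, and your consistency check at $A=[n]$ is correct. But there is a genuine gap at the step you describe as "evaluate both sums in closed form for each fixed $l$, and only afterwards bound uniformly by the maximum over $l$." The paper's bookkeeping requires \emph{two} counts, $\ell_1$ and $\ell_2$, recording the elements $x$ with $\sigma_0(x_1)<\sigma_0(x)<\sigma_0(u)$ lying in two different position ranges; they enter the per-class estimate asymmetrically, as $R(L)\leq\bigl(1+\phi^{2\ell_1}\phi_{1:|A|-2-\ell_1-\ell_2}\bigr)/\bigl(\phi^{-1}+\phi^{1+2(\ell_1+\ell_2)}\phi_{1:|A|-2-\ell_1-\ell_2}\bigr)$. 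For $\ell_1\neq\ell_2$ (e.g.\ $\ell_1=0$, $\ell_2$ moderate) this per-class ratio can exceed the claimed bound, so the direct "bound each stratum, then take the max" strategy fails; the paper says so explicitly. The fix is to pair each label class $L=(k_1,k_2,\ell_1,\ell_2)$ with its mirror $L'=(k_1,k_2,\ell_2,\ell_1)$, prove the comparison $\prob{[T_{1,L}]}/\prob{[T_{1,L'}]}\leq\phi^{\ell_2-\ell_1}$ by an explicit swap bijection, and bound the ratio for the \emph{union} $L\cup L'$. Only after this averaging does the expression with $\phi^{\ell}$ in the numerator and $\phi^{2\ell}$ in the denominator (with $\ell=\ell_1+\ell_2$) emerge. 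Your single-parameter stratification cannot produce that asymmetric closed form, and the claim that the summation "collapses" to it is assuming the hardest part of the result.

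A secondary, related gap: passing from $\sigma_A(u)=j+1$ to $\sigma_A(u)=j$ is not a single transposition of $u$ with a neighbor in the full permutation. The paper must first split both events according to whether the adjacent $A$-element is ranked above or below $u$ in the centroid (giving the sets $E_1,E_2,T_1,T_2$ and the exact factors $\phi$ and $\phi^{-1}$), and then handle the bad piece $T_2$ by a circular-shift map $\mathcal{M}:T_2\to T_1$ that is many-to-one; the geometric factor $\phi_{1:|A|-2-\ell_1-\ell_2}$ arises from summing the weights of all preimages of a fixed target, distinguished by where the displaced element $x_2$ sits. Your "slide $u$ one slot and pick up a factor $\phi^{\pm1}$ per crossing" picture does not account for this collision counting, nor for the remainder set $T_{1,L^c}$ of permutations with no preimage. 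Your final monotonicity-in-$l$ step is fine in outcome (the paper's Lemma~\ref{Lemma2Lemma2}\,/\,telescoping argument proves the bound $\leq\frac{\phi_{1:n-1}}{1+\phi_{3:n}}$ for every $\ell$ under $\phi+\phi^2<1+\phi^n$), but it only becomes available once the correct per-pair bound has been established.
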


The next result establishes the performance guarantees for the LCA algorithm with the median operation. 
\begin{theorem} \label{Thm2}
Assume that $\Sigma=\{\sigma_1,\sigma_2,...,\sigma_m\}$, where $\sigma_k\;\stackrel{\text{i.i.d}}{\sim}\;$ MM$(\sigma_0,\phi),$ $k\in[m]$. If $\phi<0.5$ and $m\geq c\log\frac{2n}{\delta},$ where $c=\frac{2}{(1-2\phi)^2}$, then the output of Algorithm 2 under the median operation equals $\sigma_0$ with probability at least $1-\delta$.  
 \label{thm-median}
\end{theorem}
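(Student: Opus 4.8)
The plan is to reduce the event $\{\hat\sigma=\sigma_0\}$ to a collection of scalar median-estimation problems, one per coordinate, and then control each by a concentration bound. Since the Lehmer map is a bijection, $\hat\sigma=\sigma_0$ holds exactly when $\hat\bfc(i)=\bfc_{\sigma_0}(i)$ for every $i\in[n]$. Using the identity $\bfc_\sigma(i)=i-\sigma_{[i]}(i)$ (the $i$-th Lehmer coordinate records the position of element $i$ in the projection onto $S_i=[i]$), together with the fact that the sample median is equivariant under the decreasing affine map $x\mapsto i-x$, the median rule of Algorithm~2 produces $\hat\bfc(i)=\bfc_{\sigma_0}(i)$ precisely when the empirical median of the projected ranks $\sigma_{1,[i]}(i),\dots,\sigma_{m,[i]}(i)$ equals $j_i^*\triangleq\sigma_{0,[i]}(i)$. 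This is exactly the InsertionSort viewpoint described before the theorem. It therefore suffices to show that, for each $i$, with high probability at least half of the samples satisfy $\sigma_{k,[i]}(i)\le j_i^*$ and at least half satisfy $\sigma_{k,[i]}(i)\ge j_i^*$, which forces the empirical median to be $j_i^*$.

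The analytic core is the one-sided distributional bound: for every $i$,
$$\prob{[\sigma_{[i]}(i)>j_i^*]}\le\phi \qquad\text{and}\qquad \prob{[\sigma_{[i]}(i)<j_i^*]}\le\phi.$$
To prove it I would write $\sigma_{[i]}(i)-j_i^*=Y-Z$, where $Y$ counts the elements $y<i$ with $\sigma_0(y)>\sigma_0(i)$ that are nonetheless ranked above $i$ in $\sigma$ (inversions on the ``below'' side), and $Z$ counts the elements $y<i$ with $\sigma_0(y)<\sigma_0(i)$ ranked below $i$ (inversions on the ``above'' side); thus $\{\sigma_{[i]}(i)>j_i^*\}$ is the event $\{Y>Z\}$. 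Bounding the total one-sided mass of this net-inversion variable by $\phi$ is the delicate step, and it \emph{cannot} be obtained from the single-step ratio estimate of Lemma~\ref{Lemma2} alone: that estimate only yields $\prob{[\sigma_{[i]}(i)>j_i^*]}\le q=\frac{\phi_{1:n-1}}{1+\phi_{3:n}}$, which can exceed $1/2$ when $\phi$ is near $1/2$. Instead one must control the \emph{cumulative} one-sided tail, which I would do through the repeated-insertion representation of the Mallows model, or equivalently a direct partition-function computation, exploiting that the per-position probabilities decay away from $j_i^*$ fast enough that their total one-sided mass is at most $\phi$. This is the main obstacle in the proof.

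Granting the bound, the remainder is routine. Fix $i$ and let $N_-=|\{k:\sigma_{k,[i]}(i)<j_i^*\}|$; since the samples are i.i.d., $N_-$ is a sum of $m$ independent $\{0,1\}$ variables with mean $mp_-$, $p_-\le\phi$. By Hoeffding's inequality and the margin $\gamma=\tfrac12-\phi=\tfrac{1-2\phi}{2}>0$ (this is where $\phi<1/2$ enters),
$$\prob{[N_-\ge m/2]}\le\exp\!\big(-2m\gamma^2\big)=\exp\!\Big(-m\tfrac{(1-2\phi)^2}{2}\Big),$$
and the symmetric bound holds for $N_+=|\{k:\sigma_{k,[i]}(i)>j_i^*\}|$. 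With $m\ge c\log\frac{2n}{\delta}$ and $c=\frac{2}{(1-2\phi)^2}$, each of these probabilities is at most $\delta/(2n)$. On the complementary event $N_-<m/2$ and $N_+<m/2$, so strictly more than half the samples lie $\ge j_i^*$ and strictly more than half lie $\le j_i^*$, which forces the empirical median to equal $j_i^*$, i.e.\ $\hat\bfc(i)=\bfc_{\sigma_0}(i)$.

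Finally, I would take a union bound over the (at most) $2n$ failure events---two sides for each of the $n$ coordinates (the first coordinate is deterministically $0$, so $2(n-1)$ events in fact suffice)---to conclude that all coordinate medians are simultaneously correct with probability at least $1-2n\cdot\frac{\delta}{2n}=1-\delta$. On that event $\hat\bfc=\bfc_{\sigma_0}$, so by bijectivity of the Lehmer code $\hat\sigma=\sigma_0$, as claimed. The only non-mechanical ingredient is the one-sided tail bound $p_\pm\le\phi$ of the second paragraph; everything else is bookkeeping combined with a standard Hoeffding estimate.
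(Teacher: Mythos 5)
Your reduction and your concentration step coincide with the paper's own proof: the paper likewise identifies the event $\hat\bfc(t)=\bfc_{\sigma_0}(t)$ with the event that the empirical median of the projected ranks $\sigma_{k,S_t}(t)$, $S_t=[t]$, equals $\sigma_{0,S_t}(t)$, applies Hoeffding's inequality to the cumulative indicator sums with the same margin $\tfrac12-\phi$, and takes a union bound over $2n$ events to obtain exactly the stated $c$ and sample complexity. The one ingredient you do not prove --- the one-sided tail bound $\prob{[\sigma_{[t]}(t)>\sigma_{0,[t]}(t)]}\le\phi$ and $\prob{[\sigma_{[t]}(t)<\sigma_{0,[t]}(t)]}\le\phi$ --- is precisely the paper's Lemma~\ref{Lemma3}, which is stated in the main text and simply invoked in the proof of the theorem; your observation that the geometric ratio estimates of Lemma~\ref{Lemma2} only give a bound in terms of $q=\frac{\phi_{1:n-1}}{1+\phi_{3:n}}$ rather than $\phi$ is correct, and is exactly why the paper needs a separate argument. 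Where you diverge is in how that bound would be established: you gesture at the repeated-insertion representation or a ``direct partition-function computation,'' neither of which is carried out, and the insertion route is not immediate because $[t]$ is generally not a prefix of the centroid order, so $\sigma_{[t]}(t)$ is not one of the insertion variables. The paper instead proves the bound via a monotone coupling (Lemma~\ref{geneLemma3}): deleting from $A$ an element ranked below $u$ in $\sigma_0$ can only decrease $\prob{[\sigma_A(u)<\sigma_{0,A}(u)]}$ (and symmetrically for the other tail), so one may strip away all such elements and reduce to the extremal case in which $u$ is the bottom element of the projected centroid; there $\prob{[\sigma_{A''}(u)=\sigma_{0,A''}(u)]}\ge 1/\phi_{0:|A''|-1}$, so the one-sided tail is at most $\phi_{1:|A''|-1}/\phi_{0:|A''|-1}<\phi$. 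If you are entitled to cite Lemma~\ref{Lemma3}, your argument is complete and essentially identical to the paper's; if you must prove it, the tail bound remains a genuine gap, and the coupling reduction just described is the missing idea.
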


The proof follows by observing that if the median of the Lehmer code $c_{\sigma_{k}}(t)$ over all $k\in[m]$ converges to $t-\sigma_{0,S_t}(t)$ as $m\rightarrow\infty$, then each $\sigma_k$ should have $\prob{[\sigma_{k,S_t}(t)>\sigma_{0,S_t}(t)]},\prob{[\sigma_{k,S_t}(t)<\sigma_{0,S_t}(t)]}<1/2$. According to the following Lemma, in this case, one needs $\phi<0.5$.

\begin{lemma} \label{Lemma3}
Let $\sigma\sim\;$ MM$(\sigma_0,\phi)$ and let $A \subseteq [n]$. 
For any $u \in A$, the following two bounds hold:
\begin{align*}
&\text{1)}\;\prob{[\sigma_A(u)>\sigma_{0,A}(u)]}\leq \frac{\phi_{1:(|A|-\sigma_{0,A}(u))}}{\phi_{0:(|A|-\sigma_{0,A}(u))}}<\phi,\\
&\text{2)}\;\prob{[\sigma_A(u)<\sigma_{0,A}(u)]} \leq \frac{\phi_{1:\sigma_{0,A}(u)}}{\phi_{0:\sigma_{0,A}(u)}}<\phi.
\end{align*}
The inequality 1) is met for $A=S$ and $\sigma_{0}(u)=1,$ while the inequality 2) is met for $A=S$ and $\sigma_{0}(u)=n$.
\end{lemma}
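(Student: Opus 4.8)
The plan is to exploit the projectivity (closure under marginalization) of the Mallows model together with a simple event-inclusion that reduces each tail bound to the rank distribution of a single centroid-extremal element, for which a clean closed form is available. I would not attempt a per-sample estimate of the conditional law of $\sigma_A(u)$ given the relative order of the other elements, because that conditional bound is genuinely false; the reduction has to be carried out at the level of the marginal, by passing to a coarser projection rather than conditioning.

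First I would record the structural fact that the Mallows model is closed under marginalization: for any $B\subseteq[n]$, the projected ranking $\sigma_B$ is distributed as $MM(\sigma_{0,B},\phi)$ on the $|B|$ elements of $B$, with the same $\phi$. This is most transparent through the repeated-insertion description of $MM(\sigma_0,\phi)$, in which the items are inserted in centroid order and the insertion displacements are independent with $\prob{[\text{displacement}=j]}\propto\phi^j$; deleting the items outside $B$ leaves an independent insertion process on $B$ with the same geometric weights, and transitivity of projection lets me project the original sample directly onto any $B\subseteq A$. I would also use the companion fact (equivalent to the factorization of $MM$ over its subdiagonal coordinates) that $\prob{[\bfc_{\sigma}(x)=j]}=\phi^{j}/\phi_{0:x-1}$ for $j=0,\dots,x-1$.

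For part 1, set $m=\sigma_{0,A}(u)$ and let $B=\{u\}\cup\{v\in A:\,\sigma_{0,A}(v)>m\}$, so that $|B|=|A|-m+1$ and $u$ is ranked first in $\sigma_{0,B}$. The key step is the inclusion $\{\sigma_A(u)>m\}\subseteq\{\sigma_B(u)>1\}$: if $\sigma_A(u)>m$ then at least $m$ elements of $A$ precede $u$ in $\sigma$, yet only $m-1$ of them can be centroid-before $u$, so at least one centroid-after element $w\in B\setminus\{u\}$ satisfies $\sigma(w)<\sigma(u)$, whence $\sigma_B(u)\geq2$. Therefore $\prob{[\sigma_A(u)>m]}\leq\prob{[\sigma_B(u)>1]}$. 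Since $\sigma_B\sim MM(\sigma_{0,B},\phi)$ with $u$ centroid-first among its $N+1$ elements ($N=|A|-m$), the number of elements preceding $u$ is the subdiagonal coordinate of the centroid-minimal element, which by the mirror symmetry (reversal of both centroid and ranking) of the centroid-maximal case satisfies $\prob{[\sigma_B(u)=1+j]}=\phi^{j}/\phi_{0:N}$. Summing over $j\geq1$ gives $\prob{[\sigma_B(u)>1]}=\phi_{1:N}/\phi_{0:N}$, the claimed bound, and the inclusion is an equality when $A=S$ and $m=1$, which yields the stated tightness. Part 2 is the mirror image: take $B'=\{u\}\cup\{v\in A:\,\sigma_{0,A}(v)<m\}$, in which $u$ is centroid-last, run the same inclusion to get $\prob{[\sigma_A(u)<m]}\leq\prob{[\sigma_{B'}(u)<|B'|]}$, and apply the closed form $\prob{[\bfc(x)=j]}=\phi^{j}/\phi_{0:x-1}$ for the centroid-maximal element, giving $\phi_{1:(m-1)}/\phi_{0:(m-1)}$. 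The final strict bound $\phi_{1:N}/\phi_{0:N}<\phi$ is pure algebra: writing $\phi_{1:N}/\phi_{0:N}=1-1/\phi_{0:N}$ and using $\phi_{0:N}=(1-\phi^{N+1})/(1-\phi)<1/(1-\phi)$ for $\phi\in(0,1)$ (and $\phi_{0:N}=N+1$ with ratio $N/(N+1)<1$ when $\phi=1$).

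The main obstacle I anticipate is making the projectivity step watertight and, relatedly, keeping the event-inclusion pointed in the correct direction; as noted above a conditional version of the estimate is false, so the argument must remain marginal by collapsing to $B$ (respectively $B'$) instead of conditioning on the remaining order. The only other point requiring care is bookkeeping: matching the exponent range to the number of positions $u$ can move, namely $N=|A|-\sigma_{0,A}(u)$ in part 1 and $\sigma_{0,A}(u)-1$ in part 2 (the latter being the index that makes the tightness claim for $A=S$, $\sigma_0(u)=n$ exact).
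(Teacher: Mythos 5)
Your reduction via the event inclusion $\{\sigma_A(u)>m\}\subseteq\{\sigma_B(u)>1\}$ is correct and closely parallels the paper's own reduction (stripping $A$ down to $u$ together with the elements ranked after $u$ in the centroid). The gap is in the step that evaluates $\prob{[\sigma_B(u)>1]}$: the Mallows model is \emph{not} closed under marginalization to arbitrary subsets $B\subseteq[n]$. Projectivity holds only when the deleted elements are of extremal centroid rank (a prefix or suffix of $\sigma_0$), which is exactly the restricted form proved in the paper's Lemma~\ref{extLemma1}; your $B=\{v\in A:\sigma_0(v)\ge\sigma_0(u)\}$ is in general not such a set, because $A$ is arbitrary. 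Concretely, for $n=3$, $\sigma_0=(1,2,3)$ and $B=\{1,3\}$ one computes $\prob{[\sigma(3)<\sigma(1)]}=\phi^2(2+\phi)/\bigl((1+\phi)(1+\phi+\phi^2)\bigr)$, which differs from the value $\phi/(1+\phi)$ predicted by MM$(\sigma_{0,B},\phi)$ for every $\phi<1$. The RIM justification fails for the same reason: deleting non-$B$ items does not leave independent geometric displacements relative to the surviving items. So your claimed \emph{equality} $\prob{[\sigma_B(u)>1]}=\phi_{1:N}/\phi_{0:N}$ is false; what is true (and what the lemma needs) is only the inequality $\le$, and that inequality is precisely the content that still has to be proved.

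The paper closes this gap with a monotonicity lemma (its Lemma~\ref{geneLemma3}): adding an element to the projection set can only increase $\prob{[\sigma_{\cdot}(u)\ge t]}$, so one may delete the centroid-later elements of $A$ and then lower-bound the probability that the centroid-extremal element of the reduced set stays extremal, obtaining $\prob{[\sigma_{A''}(u)=|A''|]}\ge 1/\phi_{0:|A''|-1}$ as an \emph{inequality} (by a swapping/injection argument of the kind used in the proofs of Lemmas~\ref{Lemma1} and~\ref{Lemma2}), not as an exact geometric law. Your write-up needs one of these two devices --- the monotone comparison under insertion/deletion of elements, or a direct injection argument on $B$ itself --- in place of the projectivity claim. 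Note also that naively enlarging $B$ to the genuine centroid suffix $\{v\in[n]:\sigma_0(v)\ge\sigma_0(u)\}$ and then invoking the legitimate restricted projectivity yields the exponent $n-\sigma_0(u)$ rather than $|A|-\sigma_{0,A}(u)$, which proves the ``$<\phi$'' conclusion but not the sharper displayed bound. The remainder of your argument --- the inclusion itself, the mirror treatment of part 2, the tightness at $A=S$ with $u$ centroid-extremal, and the algebra showing $\phi_{1:N}/\phi_{0:N}<\phi$ --- is fine.
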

We now turn our attention to partial rankings and prove the following extension of the previous result for the GMM, under the LCA algorithm that uses the median of coordinate values. Note that the output of Algorithm 2 is essentially a permutation, although it may be transformed into a partial ranking via the bucketing method described in Section 2. 
\begin{theorem} \label{Thm4}
Assume that $\Sigma=\{\sigma_1,\sigma_2,...,\sigma_m\}$, where $\sigma_k\;\stackrel{\text{i.i.d}}{\sim}\;$GMM$(\sigma_0,\phi),$ $k\in[m]$. 
If $\phi+\phi^{1/2}<1$ and $m\geq c\log\frac{2n}{\delta}$ with $c=\frac{2}{(1-2q')^2},$ where $q'=1-\frac{1}{2}\phi^{1/2}-\frac{1}{2}\phi$, then the output ranking of the LCA algorithm (see Appendix~\ref{ProofThm1}) under the median operation is in $\Sigma_0$ with probability at least $1-\delta$. Here, $\Sigma_0$ denotes the set of permutations generated by breaking ties in $\sigma_0$.
\label{thm-mode-par}
\end{theorem}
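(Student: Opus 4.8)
The plan is to mirror the proof of Theorem~\ref{thm-median}, replacing the exact positions used in Lemma~\ref{Lemma3} by the \emph{fractional} (interval) positions produced by the median voting rule \eqref{votevalue}--\eqref{medianPR}, and to track the extra cost that the Kemeny half-weight on tied pairs introduces. First I would recast the median LCA as an InsertionSort procedure with uniform tie-breaking. The key observation is that the fractional vote $v_{k\to t}(y)=\frac{1}{\bfc_{\sigma_k}'(t)-\bfc_{\sigma_k}(t)+1}$, spread over $y\in[t-\bfc_{\sigma_k}'(t),\,t-\bfc_{\sigma_k}(t)]$, is exactly the conditional distribution of the position $\tilde\sigma_{k,S_t}(t)$ of element $t$ in the projection onto $S_t=[t]$ when the ties of $\sigma_k$ inside the bucket of $t$ are broken uniformly at random. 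Consequently $\tfrac1m V_t(y)$ is the empirical mean of these conditional position-distributions, and the coordinate $\hat{\bfc}(t)$ of \eqref{medianPR} is precisely the empirical median of the tie-broken $S_t$-position of $t$.

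Second, I would reduce the event $\hat\sigma\in\Sigma_0$ to a per-element statement. Let $[l_t,r_t]$ be the range of positions occupied by the bucket containing $t$ in the projected centroid $\sigma_{0,S_t}$. I claim that if, for every $t\in[n]$, the median position $\hat\sigma_{S_t}(t)$ lies in $[l_t,r_t]$, then the InsertionSort reconstruction is consistent with $\sigma_0$ and hence lies in $\Sigma_0$; this follows by induction on $t$, since inserting $t$ into the gap reserved for its $\sigma_0$-bucket preserves the strict order between buckets and only permutes elements tied with $t$ in $\sigma_0$. Thus it suffices to control, for each $t$, the two ``wrong-side'' events $\{\hat\sigma_{S_t}(t)>r_t\}$ and $\{\hat\sigma_{S_t}(t)<l_t\}$.

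Third, fix $t$ and let $X_k\in[0,1]$ be the fractional vote-mass that sample $k$ places at positions $>r_t$; the $X_k$ are i.i.d.\ with $\avg[X_k]=\prob{[\tilde\sigma_{S_t}(t)>r_t]}$, and the empirical median exceeds $r_t$ only if $\tfrac1m\sum_k X_k\geq\tfrac12$. The heart of the argument is the GMM analog of Lemma~\ref{Lemma3}: I would show $\avg[X_k]\le 1-q'=\tfrac12(\phi+\phi^{1/2})$, and symmetrically $\prob{[\tilde\sigma_{S_t}(t)<l_t]}\le 1-q'$. In the decomposition of this probability, moving $t$ into a strictly lower $\sigma_0$-bucket costs full inversions in the Kemeny distance \eqref{partialmetric} and contributes geometric $\phi$-terms exactly as in Lemma~\ref{Lemma3}, whereas the configurations in which $t$ stays tied but is broken to the wrong side contribute the half-weighted $\phi^{1/2}$-terms; bounding the resulting series together with the uniform tie-breaking mass yields the factor $\tfrac12(\phi+\phi^{1/2})$, and the hypothesis $\phi+\phi^{1/2}<1$ is precisely the condition making $1-q'<\tfrac12$. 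Given this bound, Hoeffding's inequality gives $\prob{[\tfrac1m\sum_k X_k\ge\tfrac12]}\le\exp\!\big(-\tfrac{m}{2}(2q'-1)^2\big)$, and a union bound over the $2n$ wrong-side events, combined with $m\ge c\log\frac{2n}{\delta}$ and $c=\frac{2}{(1-2q')^2}$, drives the total failure probability below $\delta$.

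The main obstacle is the third step, the Kemeny-weighted tail bound: unlike the permutation case, the distance \eqref{partialmetric} couples full inversions with half-weighted tied pairs, so the GMM normalizing constant and the uniform tie-breaking must be handled simultaneously. Establishing the clean bound $\tfrac12(\phi+\phi^{1/2})$ and verifying that the extremal tie configurations (the analogs of the $\sigma_0(u)\in\{1,n\}$ boundary cases of Lemmas~\ref{Lemma1}--\ref{Lemma3}) do not exceed it is the delicate part of the proof.
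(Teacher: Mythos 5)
Your architecture coincides with the paper's: the InsertionSort reinterpretation, the reduction of $\{\hat\sigma\in\Sigma_0\}$ to the $n$ per-coordinate events $\{\hat\sigma_{S_t}(t)\in[l_t,r_t]\}$, the identification of the median coordinate of \eqref{medianPR} with the empirical median of the fractional vote mass, and the concluding Hoeffding-plus-union-bound step over the $2n$ wrong-side events are all exactly what the paper does. (Your observation that the fractional votes of \eqref{votevalue} are the law of a uniformly tie-broken position of $t$ in $\sigma_{k,S_t}$ is a nice gloss the paper does not make explicit, but it changes nothing.) The gap is that essentially all of the technical content of the theorem sits in the single bound $\avg[X_k]\le\tfrac12(\phi+\phi^{1/2})$, equivalently $\avg[Y_k(r_t)]\ge 1-\tfrac12\phi^{1/2}-\tfrac12\phi$, and you only gesture at it: you correctly flag it as the delicate step, but the heuristic you offer (full inversions give $\phi$-terms, wrongly broken ties give $\phi^{1/2}$-terms) does not by itself produce the constant, nor does it dispose of the extremal configurations you yourself worry about.

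The paper closes this in two moves (Lemma~\ref{Lemma6} and its proof in Appendix~\ref{ProofLemma6}). First, a monotone reduction in the spirit of Lemma~\ref{geneLemma3}: deleting from $A=S_t$ every element ranked at or above $t$ in the projected centroid (other than $t$ itself) can only decrease the expected correct-side vote mass $\prob{[W]}+\sum_j V_j\prob{[Q_j]}$, because for each fixed sample the contributed vote is unchanged or reduced by each such deletion. This collapses the worst case to the situation where $t$ is alone in the top bucket of the projected centroid, which is precisely the boundary case you were concerned about. Second, on the reduced set one partitions the samples by the size $j$ of the offending bucket into $W_j'$ (correct side, vote $1$), $Q_j'$ (tied with $j$ other elements, vote at least $\tfrac{1}{j+1}$), and $U_j'$ (strictly wrong side, vote $0$); explicit swapping maps give $\prob{[Q_j']}\le\phi^{j/2}\prob{[W_j']}$ and $\prob{[U_j']}\le\frac{\phi^{j/2}}{1-\phi^{1/2}}\prob{[Q_j']}$, and the per-$j$ ratio $\bigl(1+\tfrac{1}{j+1}\phi^{j/2}\bigr)/\bigl(1+\phi^{j/2}+\tfrac{\phi^{j}}{1-\phi^{1/2}}\bigr)$ is then shown, using $\phi+\phi^{1/2}<1$, to be minimized at $j=1$, where it equals exactly $1-\tfrac12\phi^{1/2}-\tfrac12\phi$. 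Without these two steps your outline cannot be completed as written.
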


The proof of this theorem relies on showing that the InsertionSort procedure places elements in their correct position with high probability. If the median is used for partial ranking aggregation, one vote is uniformly distributed amongst all possible positions in the range given by~\eqref{votevalue}. To ensure that the output permutation is in $\Sigma_0$, we need to guarantee that the median of the positions of the votes for $t$ over $S_t$ is in $[l_{\sigma_0,S_t}(t), r_{\sigma_0,S_t}(t)]$ for large enough $m$ (as in this case, $[l_{\sigma_0,S_t}(t), r_{\sigma_0,S_t}(t)]$ represents the bucket in $\sigma_0$ that contains $t$).

For a $\sigma\;\sim\;$GMM$(\sigma_0,\phi)$, let $v(j)$ be the vote that the partial ranking $\sigma$ cast for position $j$. Then, one requires that
\begin{equation}
\mathbb{E}[\sum_{k=1}^{r_{\sigma_{0,A}(u)}}v(j)]> 0.5 \; \text{ and } \; \mathbb{E}[\sum_{k=l_{\sigma_{0,A}(u)}}^{n}v(j)]> 0.5. \notag
\end{equation}
The expectations in the expressions above may be evaluated as follows (We only consider the expectation on the left because of symmetry). If the event $W=\{r_{\sigma_{S_t}(t)}\leq r_{\sigma_{0,S_t}(t)}\}$ occurs, then the vote of $\sigma$ that contributes to the sum equals $1$. If the event $Q=\cup_{j=1}^{n-r_{\sigma_{0,S_t}(t)}}Q_j$, where $Q_j=\{r_{\sigma_{S_t}(t)}=j+r_{\sigma_{0,S_t}(t)}, \;l_{\sigma_{S_t}(t)}\leq r_{\sigma_{0,S_t}(t)}\}$ occurs, then the vote that $\sigma$ contributes to the sum equals $V_j=\frac{r_{\sigma_{0,S_t}(t)}-l_{\sigma_{S_t}(t)}+1}{r_{\sigma_{S_t}(t)}-l_{\sigma_{S_t}(t)}+1}.$ Therefore, we have 
\begin{equation} 
\mathbb{E}{[\sum_{k=1}^{r_{\sigma_{0,{S_t}}(t)}}v(k)]}= \prob{[W]} + \sum_{j=1}^{n-r_{\sigma_0}(u)}V_j\prob{[Q_j]}. \label{partialmedian3}
\end{equation}
The following lemma describes a lower bound for~\eqref{partialmedian3}. 
\begin{lemma} \label{Lemma6}
Let $\sigma\sim\;$GMM$(\sigma_0,\phi)$ and let $A \subseteq [n]$ be such that it contains a predefined element $u$. Let $A'=A-\{x\in A: x\neq u, \sigma_{0,A}(x)\leq\sigma_{0,A}(u) \}$. Define 
\begin{align}
&W=\{r_{\sigma_{A}(u)}\leq r_{\sigma_{0,A}(u)}\}, \nonumber \\ 
&Q_j=\{r_{\sigma_{A}(u)}=j+r_{\sigma_{0,A}(u)},\;l_{\sigma_A(u)}\leq r_{\sigma_{0,A}(u)}\}, \nonumber \\ 
&W'=\{r_{\sigma_{A'}(u)}\leq r_{\sigma_{0,A'}(u)}\}, \nonumber \\ 
&Q_j'=\{r_{\sigma_{A'}(u)}=j+r_{\sigma_{0,A'}(u)},\;l_{\sigma_A'(u)}\leq r_{\sigma_{0,A'}(u)}\}. \notag
\end{align}
Then, one can prove that
\begin{eqnarray}
&\prob{[W]} + \sum_{j=1}^{|A|-r_{\sigma_{0,A}}(u)}V_j\prob{[Q_j]}\nonumber\\
\geq& \prob{[W']} + \sum_{j=1}^{|A'|-r_{\sigma_{0,A'}}(u)}\frac{1}{j+1}V_j\prob{[Q_j']} \label{partialmedian4} \nonumber \\
\geq& 1-\frac{1}{2}\phi^{1/2}-\frac{1}{2}\phi. \nonumber 
\end{eqnarray} 
\end{lemma}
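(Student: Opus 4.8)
The plan is to establish the two inequalities in~\eqref{partialmedian4} separately, since the final bound follows by chaining them. For the first inequality, I would exploit the reduction from $A$ to $A'$. The key observation is that the element $u$'s upper bucket boundary $r_{\sigma_A(u)}$ and lower boundary $l_{\sigma_A(u)}$ depend only on how $u$ interleaves with elements ranked at or below it in $\sigma_0$, together with the tie-breaking among elements sharing $u$'s bucket. Removing from $A$ all elements $x\neq u$ with $\sigma_{0,A}(x)\le\sigma_{0,A}(u)$ (i.e.\ passing to $A'$) should not decrease the relevant probabilities, because those removed elements can only push $u$ downward or create additional positions $j+r_{\sigma_{0,A}(u)}$ with larger $j$, diluting the fractional votes $V_j$. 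The factor $\frac{1}{j+1}$ on the right accounts for the worst-case redistribution of vote mass when passing to the smaller set. I would make this precise by a coupling or a direct counting argument comparing the events $W,Q_j$ on $A$ with $W',Q_j'$ on $A'$, showing termwise that the left sum dominates.

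For the second inequality, which is the crux, I would work directly over $A'$. Here $u$ is (by construction of $A'$) among the lowest-ranked elements of the centroid restricted to $A'$, so $r_{\sigma_{0,A'}(u)}=|A'|$ and the bucket containing $u$ sits at the bottom. This simplifies the event $W'=\{r_{\sigma_{A'}(u)}\le r_{\sigma_{0,A'}(u)}\}$ to a near-certain event, and makes the sum over $Q_j'$ short or empty. The plan is then to invoke the single-element displacement bounds already available: Lemma~\ref{Lemma3} controls $\prob{[\sigma_{A'}(u)>\sigma_{0,A'}(u)]}$ and $\prob{[\sigma_{A'}(u)<\sigma_{0,A'}(u)]}$, each below $\phi$, and the generalized Mallows analysis underlying Lemma~\ref{Lemma2} controls how far $u$ can drift and how the tie-bucket of $u$ spreads. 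I would decompose $\prob{[W']}$ and each $V_j\prob{[Q_j']}$ in terms of the displacement probability of $u$ and the conditional distribution of the bucket size, then bound the resulting expression below by $1-\frac{1}{2}\phi^{1/2}-\frac{1}{2}\phi$. The $\phi^{1/2}$ term should emerge from bounding the probability that $u$ is displaced upward by at least one position together with the fractional weight $V_j\le\frac{1}{2}$ attached to the first displaced position, while the $\phi$ term captures a second-order displacement.

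The main obstacle I anticipate is controlling the fractional votes $V_j=\frac{r_{\sigma_{0,A'}(u)}-l_{\sigma_{A'}(u)}+1}{r_{\sigma_{A'}(u)}-l_{\sigma_{A'}(u)}+1}$ simultaneously with the probabilities $\prob{[Q_j']}$, because $V_j$ couples the random upper boundary $r_{\sigma_{A'}(u)}$ with the random lower boundary $l_{\sigma_{A'}(u)}$ of $u$'s bucket under the GMM. Unlike the full-permutation case of Lemma~\ref{Lemma3}, the bucket endpoints are jointly distributed, and a naive union bound over $j$ would lose the $\phi^{1/2}$ scaling. I would handle this by bounding $V_j\le 1$ crudely where the probability $\prob{[Q_j']}$ is already geometrically small in $j$, and reserving the sharper fractional estimate for the leading term $j=1$, where $V_1\le\frac{1}{2}$ contributes the $\frac{1}{2}\phi^{1/2}$ discount. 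Verifying that the geometric decay of $\prob{[Q_j']}$ in $j$ (inherited from the Mallows ratio bounds) is strong enough to absorb the remaining terms into the $\frac{1}{2}\phi$ contribution is where the bulk of the estimation effort will lie, and is the step most likely to require the condition $\phi+\phi^{1/2}<1$ to close.
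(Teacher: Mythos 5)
There is a genuine gap, and it sits exactly where you locate the crux. You have the orientation of $A'$ reversed: $A'=A-\{x\in A: x\neq u,\ \sigma_{0,A}(x)\leq\sigma_{0,A}(u)\}$ deletes the elements ranked \emph{at or above} $u$ in the centroid, so in $\sigma_{0,A'}$ the element $u$ becomes the unique \emph{top}-ranked element, with $r_{\sigma_{0,A'}}(u)=1$, not $|A'|$. Consequently $W'$ is the far-from-certain event that $u$ stays in the first bucket of the sample ranking, and the sum over $Q_j'$ runs over all $j$ up to $|A'|-1$; nothing is ``near-certain'' or ``short or empty.'' Your entire plan for the second inequality (treating $j=1$ as the only significant term, invoking Lemma 4.5 for displacement control) rests on this misreading and would not produce the bound. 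Moreover, Lemma 4.5 is proved for the Mallows model on permutations under the Kendall $\tau$ distance, whereas the present lemma lives in the GMM with the Kemeny distance, where moving an element into or across a bucket of size $j$ costs $\phi^{j/2}$; that half-power is precisely where the $\phi^{1/2}$ in the target bound comes from and is not recoverable from the permutation lemmas.

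The paper's actual argument for the second inequality uses a three-way partition you do not anticipate. With $u$ alone at the top of $\sigma_{0,A'}$, one splits the sample space by a bucket-size parameter $j$ into $W_j'$ ($u$ alone in the first bucket, second bucket of size $j$), $Q_j'$ ($u$ tied with $j$ other elements in the first bucket), and a zero-vote class $U_j'$ (first bucket consists of $j$ elements, none of which is $u$), and proves by explicit bijections/mappings that $\prob{[Q_j']}\leq\phi^{j/2}\prob{[W_j']}$ and $\prob{[U_j']}\leq\frac{\phi^{j/2}}{1-\phi^{1/2}}\prob{[Q_j']}$. The ratio $\bigl(\prob{[W_j']}+\frac{1}{j+1}\prob{[Q_j']}\bigr)/\prob{[W_j'\cup Q_j'\cup U_j']}$ is then minimized at $j=1$ under $\phi+\phi^{1/2}<1$, giving exactly $1-\frac{1}{2}\phi^{1/2}-\frac{1}{2}\phi$. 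Your proposal never accounts for the zero-vote event $U_j'$, which is essential: without it the normalization is wrong and the $\frac{1}{2}\phi$ term cannot appear. For the first inequality your instinct (termwise domination when passing from $A$ to $A'$) does match the paper, which removes the top-ranked element one at a time and checks sample-by-sample that each removal can only lower the contributed vote, the factor $\frac{1}{j+1}$ being just the deterministic lower bound $V_j\geq\frac{1}{j+1}$; that part is salvageable, but the second inequality needs to be rebuilt from scratch.
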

If $\phi+\phi^{1/2}<1$, the lower bound above exceeds $1/2$. Theorem~\ref{Thm4} then follows using the union bound and Hoeffding's inequality.

\section{Performance Evaluation} \label{sec:simulations}

We next evaluate the performance of the LCA algorithms via experimental methods and compare it to that of other rank aggregation methods using both synthetic and real datasets. For comparative analysis, we choose the Fas-Pivot and FasLP-Pivot (LP) methods~\cite{ailon2008aggregating}, InsertionSort with Comparison (InsertionComp) from~\cite{dwork2001rank}, and the optimal Spearman Footrule distance aggregator (Spearman)~\cite{diaconis1977spearman}. For the randomized algorithms Fas-Pivot and FasLP-Pivot, the pivot in each iteration is chosen randomly. For InsertionSort with Comparison, the insertion order of the elements is also chosen randomly. Furthermore, for all three methods, the procedure is executed five times, and the best solution is selected. For Fas-Pivot and FasLP-Pivot, we chose the better result of Pick-A-Perm and the given method, as suggested in~\cite{ailon2008aggregating}. 

In the context of synthetic data, we only present results for the Mallows model in which the number of ranked items equals $n=10,$ and the number of rankings equals $m=50$. The variance parameter was chosen according to $\phi=e^{-\lambda}$, where $\lambda$ is allowed to vary in $[0,1]$.  For each parameter setting, we ran $50$ independent simulations and computed the average cumulative Kendall $\tau$ distance (normalized by $m$) between the output ranking and $\Sigma$, given as $D_{av}=\frac{D(\sigma,\Sigma)}{m}$. We then normalized the $D_{av}$ value of each algorithm by that of FasLP-Pivot, since FasLP-Pivot always offered the best performance. The results are depicted in Fig.~\ref{fig:MM}. Note that we used MostProb to describe the most probable ranking, which is the centroid for the Mallows Model.

\begin{figure}[htb]
\centering
  \centerline{\includegraphics[width=9cm]{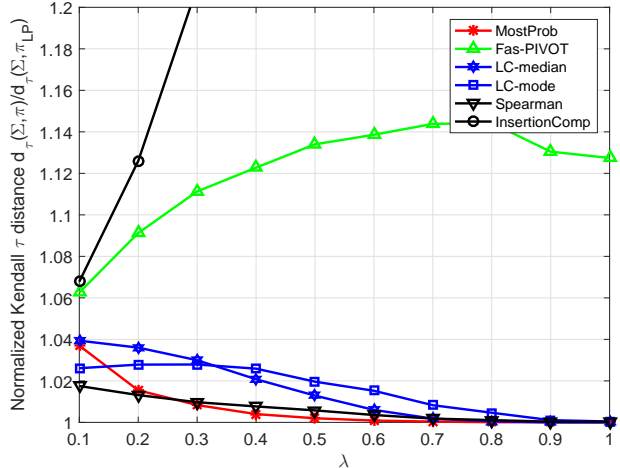}}
  \caption{The normalized Kendall $\tau$ Distance vs the parameter $\lambda$ of the Mallows Model.}
\label{fig:MM}
\end{figure}
%
%
%
Note that for parameter values $\lambda \geq 0.6$ LCA algorithms perform almost identically to the best aggregation method, the LP-based pivoting scheme. For smaller values of $\lambda$, small performance differences may be observed; these are compensated by the significantly smaller complexity of the LCA methods which in the parallel implementation mode is only linear in $n$ and $m$. Note that the InsertionSort Comp method performs poorly, although it ensures local Kemeny optimality. 

We also conducted experiments on a number of real-world datasets. To test the permutation LCA aggregation algorithms, we used the Sushi ranking dataset~\cite{kamishima2003nantonac} and the Jester dataset~\cite{goldberg2001eigentaste}. The Sushi dataset consists of $5000$ permutations involving $n=10$ types of sushi. The Jester dataset contains scores in the continuous interval $[-10,10]$ for $n=100$ jokes submitted by $48483$ individuals. 
We chose the scores of $14116$ individuals who rated all $100$ jokes and transformed the rating into permutations by sorting the scores. For each dataset, we tested our algorithms by randomly choosing $m$ many samples out of the complete list and by computing the average cumulative Kendall $\tau$ distance normalized by $m$ via $50$ independent tests. The results are listed in the Table~\ref{sushifull} and Table~\ref{jesterfull}.

\begin{table} 
\caption{Rank aggregator comparison for the Sushi dataset (permutations) }
\centering
\label{sushifull}
\vspace*{.05in}
       \begin{tabular}{c@{\hspace*{.15in}}|c|c|c|c|c}
       	  $m$ 		& 10 	& 50 & 200 & 1000 & 5000 \\
	  \hline
      	Fas-Pivot  		&   14.51 	& 15.98 & 16.18 & 16.38 & 16.06  \\
      	FasLP-Piovt     	&  13.59 & 15.00 & 15.33 & 15.39 & 15.39 \\
   	InsertionComp  	& 15.87    & 16.60 & 16.70 & 16.80 & 16.65 \\
	Spearman      	&  14.41 & 15.24 & 15.54 & 15.56 & 15.61 \\ 
	LC-median       	&  14.03  & 15.25  & 15.57 & 15.58 & 15.74 \\
	LC-mode        	&  14.19 & 15.33 & 15.46 & 15.47 & 15.49 \\
      \end{tabular}
\vspace*{.05in}
\end{table}

\begin{table} 
\centering
\caption{Rank aggregator comparison for the Jester dataset (permutations) }
\label{jesterfull}
\vspace*{.05in}
       \begin{tabular}{c@{\hspace*{.15in}}|c|c|c|c|c}
       	  $m$ 		& 50 	& 200 & 1000 & 5000 & 10000 \\
	  \hline
      	Fas-Pivot  		&   2102 	&  2137 & 2144 &  2127 & 2127  \\
	FasLP-Piovt     	&  1874  & 1915  & 1920 & 1922 & 1921\\
   	InsertionComp  	& 2327   & 2331 & 2337 & 2323 & 2390 \\
	Spearman      	&  1900 & 1936 & 1935 & 1937 & 1937 \\ 
	LC-median       	&  1932  & 1962  & 1965 & 1966 & 1965 \\
	LC-mode        	&  1973  & 1965 & 1962 & 1964 & 1965 \\
      \end{tabular}
\vspace*{.05in}
\end{table}

To test our partial ranking aggregation algorithms, we used the complete Jester dataset~\cite{goldberg2001eigentaste} and the Movielens dataset~\cite{harper2016movielens}. For the Jester dataset, we first rounded the scores to the nearest integer and then placed the jokes with the same integer score in the same bucket of the resulting partial ranking. We also assumed that the unrated jokes were placed in a bucket ranked lower than any other bucket of the rated jokes. The movielens dataset contains incomplete lists of scores for more than $1682$ movies rated by $943$ users. The scores are integers in $[5],$ so that many ties are present. We chose the $50$ most rated movies and $500$ users who rated these movies with largest coverage. Similarly as for the Jester dataset, we assumed that the unrated movies were tied for the last position. In each test, we used the iterative method described in Section 3 to transform permutations into partial rankings. Note that when computing the Kemeny distance between two partial rankings of~\eqref{partialmetric}, we omitted the penalty incurred by ties between unrated elements, because otherwise the iterative method would yield too many ties in the output partial ranking. More precisely, we used the following formula to assess the distance between two incomplete partial rankings~\eqref{partialmetric2}: 
\begin{align}
&d_{\tau}(\pi,\sigma)=|\{(x,y): \pi(x)>\sigma(y),\pi(x)<\sigma(y)\}|\nonumber\\
+&\frac{1}{2}|\{(x,y): \left[\pi(x)=\pi(y),\sigma(x)>\sigma(y),\;x,y\;\text{rated by}\,\pi \right] \nonumber \\
\text{or}&\;\left[\pi(x)>\pi(y),\sigma(x)=\sigma(y), \;x,y\;\text{rated by}\,\sigma\right]\}|. \label{partialmetric2}
\end{align}
The results are listed in Table~\ref{jesterpartial} and Table~\ref{moviepartial}. As may be seen, the parallelizable, low-complexity LCA methods tend to offer very similar performance to that of the significantly more computationally demanding LP pivoting algorithm.
\begin{table} 
\centering
\caption{Rank aggregator comparison for the Jester dataset (partial rankings)}
\label{jesterpartial}
\vspace*{.05in}
       \begin{tabular}{c@{\hspace*{.15in}}|c|c|c|c|c}
       	  $m$ 		& 50 	& 200 & 1000 & 5000 & 10000 \\
	  \hline
      	Fas-Pivot  		&   1265 	&  1280 & 1279 &  1279 & 1281  \\
	FasLP-Piovt     	&  1264  & 1280  & 1279 & 1279 & 1281\\
   	InsertionComp  	&  1980   & 1967 & 1956 & 1949 & 1979 \\
	Spearman      	&  1272 & 1284 & 1281 & 1281 & 1282 \\ 
	LC-median       	&  1275  & 1287  & 1284 & 1283 & 1287 \\
	LC-mode        	&  1311  & 1304 & 1289 & 1283 & 1283 \\
      \end{tabular}
\vspace*{.05in}
\end{table}

\begin{table} 
\centering
\caption{Rank aggregator comparison for the Movielens dataset (partial rankings)}
\label{moviepartial}
\vspace*{.05in}
       \begin{tabular}{c@{\hspace*{.15in}}|c|c|c|c|c}
       	  $m$ 		& 20 	& 50 & 100 & 200 & 500 \\
	  \hline
      	Fas-Pivot  		&   328.8 	&  344.4 & 350.3 &  351.4 & 353.3  \\
	FasLP-Piovt     	&  328.6  & 344.4 & 350.3 & 351.4 & 353.5\\
   	InsertionComp  	& 386.3   & 390.2 & 392.6 & 393.1 & 393.0 \\
	Spearman      	&  332.9 & 347.3 & 352.5 & 353.5 &  355.4 \\ 
	LC-median       	&  334.2  & 350.4  & 355.4 & 355.9 & 359.1 \\
	LC-mode        	&  340.1  & 353.5 & 357.5 & 359.0 & 360.0 \\
      \end{tabular}
\vspace*{.05in}
\end{table}

\newpage

\renewcommand\refname{\normalsize\bf Reference}

\newpage
\appendix
\section{Proof of Lemma 4.2} \label{ProofLemma1}
Before proceeding with the proof, we remark that some ideas in our derivatione have been motivated by Lemma 10.7 of~\cite{awasthi2014learning}.

Let $i\triangleq\sigma_0(u)$. Suppose that $n>j\geq i $ and that we want to prove statement 1) (the second case when $0< j\leq i$ may be handled similarly). When $i=1$, the underlying ratio is exactly equal to $\phi$. Hence, we only consider the case when $i>1$. Let $E=\{\sigma: \sigma(u)=j\}$ and $T=\{\sigma: \sigma(u)=j+1\}$. In this case, $\prob{[\sigma(u)=j]}=\prob{[E]}$ and $\prob{[\sigma(u)=j+1]}=\prob{[T]}$. Define the sets: 
\begin{align*}
E_{1}&=\{\sigma: \sigma(u)=j, \sigma_0(\sigma^{-1}(j+1))>i\}, \\
E_{2}&=\{\sigma: \sigma(u)=j, \sigma_0(\sigma^{-1}(j+1))<i\}, \\
T_{1}&=\{\sigma: \sigma(u)=j+1, \sigma_0(\sigma^{-1}(j))>i\}, \\
T_{2}&=\{\sigma: \sigma(u)=j+1, \sigma_0(\sigma^{-1}(j))<i\}. 
\end{align*}
Clearly, $\prob{[E]}=\prob{[E_{1}]}+\prob{[E_{2}]}$ and $\prob{[T]}=\prob{[T_{1}]}+\prob{[T_{2}]}$. By swapping $u$ and $\sigma^{-1}(j+1)$, we can construct two bijections $E_{1}\leftrightarrow T_{1}$ and $E_{2}\leftrightarrow T_{2}$. Statement $1)$ can then be easily proved by using the following three claims: 
\begin{align}
 \prob{[T_{1}]}&=\phi \prob{[E_{1}]}, \nonumber\\
\prob{[T_{2}]}&=\frac{1}{\phi} \prob{[S_{2}]}, \nonumber \\
0<\prob{[T_{2}]}&\overset{a)}{\leq} \phi_{1:n-1} \prob{[T_{1}]}.  \label{Lemma1ProbRatio}
\end{align}
Observe that inequality is achieved in a) when $j=n-1$. The first two claims are straightforward to check, and hence we only prove the third claim. 

Consider a mapping from $T_{2}$ to $T_{1}$ based on circular swapping of elements, and let $\sigma\in T_{2}$. Since $\sigma(u)-1=j\geq i$ and $\sigma_0(\sigma^{-1}(j))<i$, there must exist an element $x$ such that $\sigma_0(x)>\sigma_0(u)$ and $\sigma(x)<j$. Choose the element $x$ with the largest corresponding value of $\sigma(x)$ and construct a new ranking $\sigma'$ such that
\begin{equation*}
\sigma'(y)=\left\{
\begin{array} {lc}
\sigma (y),\quad &  \text{ if } \sigma(y)<\sigma(x)\;\text{or}\; \sigma(y)\geq\sigma(u),   \\
 \sigma(y) -1, \quad & \text{ if } \sigma(x)< \sigma(y)\leq \sigma(u),  \\
j, \quad &  \text{ if } \sigma(y)=\sigma(x).\\
\end{array}
\right.
\end{equation*}
It is easy to see that $\sigma'\in T_{1}$. Given that all elements ranked between $x$ and $u$ in $\sigma$ have rank higher than $\sigma_0(x)$, we have $\prob{[\sigma]}=\phi^{\sigma(u)-\sigma(x)-1}\prob{[\sigma']}=\phi^{j-\sigma(x)}\prob{[\sigma']}$. Note that the above mapping is neither a bijection nor an injection. Denote the mapping by $\mathcal{M}: T_{j,2}\rightarrow T_{2}$. For each $\sigma'\in T_{1}$, define $T_{2,\sigma'}\subset T_{2}$, so that for all $\sigma\in T_{2,\sigma'}$, $\mathcal{M}(\sigma)=\sigma'$. Then, $\cup_{\sigma'\in T_{1}}T_{2,\sigma'}=T_{2}$ forms a partition of the set $T_{2}$. Next, consider two distinct rankings $\sigma_1,\sigma_2\in T_{2,\sigma'}.$ These rankings must rank the element $x$ differently, i.e., one must have $\sigma_1(x)\neq \sigma_2(x)$. Therefore, $\prob{[T_{2,\pi'}]}\leq \prob{[\pi']} \phi_{1:j-1}=\prob{[\pi']}\phi_{1:j-1}$. As a result, $\prob{[T_{2}]}\leq\prob{[T_{1}]}\phi_{1:n-1}$, which proves the third claim. We conclude by observing that the condition under which equality is achieved in the bound stated in the lemma is exactly the same condition under which equality is achieved in the bound stated in the third claim. 

\section{Proof of Lemma 4.3} \label{ProofLemma2}

Let $i\triangleq\sigma_{0,A}(u)$. Suppose that $n>j\geq i$ and that we want to prove statement 1) (the case when $0< j\leq i$ may be handled similarly). Let $E=\{\pi: \pi_{A}(u)=j\}$ and $T=\{\pi: \pi_{A}(u)=j+1\}$. The left-hand-side in the statement of 1) equals the ratio $\frac{\prob{[T]}}{\prob{[E]}}$. 
Note that removing a fixed number of elements in $S$ of lowest (or highest) rank in the centroid ranking does not change the probability of the ranking involving the remaining elements (see Lemma~\ref{extLemma1} for the proof). We can hence assume that $\sigma_{0,A}^{-1}(1)$ is the element with highest rank in $\sigma_0$. 

When $i=1$, for any ranking $\sigma$ in $T$, we can swap the element $u$ with the element $x\in A$ for which $\sigma_A(x)=\sigma_A(u)-1$ to obtain another ranking $\sigma'\in E$. Moreover, it is easy to check that $\prob{[\sigma']}\phi\geq \prob{[\sigma]}$, so that the ratio in the statement 1) does not exceed $\phi$. Note that we have inequality $``\geq''$ instead of equality $``=''$ in $\prob{[\sigma']}\phi\geq \prob{[\sigma]}$, since there may potentially exists other elements in $S/A$ ranked between $x$ and $u$ in $\sigma$. 

Next, consider the case when $i>1$. Define the sets 
\begin{align*}
E_{1}&=\{\sigma: \sigma_A(u)=j, \sigma_{0,A}(\sigma_A^{-1}(j+1))>i\}, \\
E_{2}&=\{\sigma: \sigma_A(u)=j, \sigma_{0,A}(\sigma_A^{-1}(j+1))<i\}, \\
T_{1}&=\{\sigma: \sigma_A(u)=j+1, \sigma_{0,A}(\sigma_A^{-1}(j+1))>i\}, \\
T_{2}&=\{\sigma: \sigma_A(u)=j+1,\sigma_{0,A}(\sigma_A^{-1}(j+1))<i\}. 
\end{align*}
Then, $\prob{[E]}=\prob{[E_{1}]}+\prob{[E_{2}]}$ and $\prob{[T]}=\prob{[T_{1}]}+\prob{[T_{2}]}$. By swapping $u$ and $\sigma_A^{-1}(j+1)$, we can construct two bijections $E_{1}\leftrightarrow T_{1}$ and $E_{2}\leftrightarrow T_{2}$ as follows. 

Let us consider a finer partition of $T_{2}$ in terms of permutations with four labels. More precisely, associate each ranking $\sigma\in T_{2}$ with a label vector $(x_1,x_2,\ell_1,\ell_2)$, where: 
\begin{description}
\item{$x_1$} =  $\pi^{-1}(j)$. Note that $\sigma_{0,A}(x_1)< i$ due to the definition of $T_2$. 
\item{$x_2$} =  $\arg\max_{x: \sigma_{0,A}(x)>i,  \sigma_A(x)< \sigma_A(u)} \sigma_A(x)$; the label $x_2$ is well-defined due to the pigeon-hole principle.

\item{$\ell_1$} = the cardinality of the set $F_1$ defined as 
\begin{align*}
F_1=\{x\in[n]: \sigma_{0}(x_1)<\sigma_{0}(x)<\sigma_{0}(u), \sigma(\sigma_A^{-1}(j-1))<\sigma(x)<\sigma(x_1)\}. 
\end{align*}
\item{$\ell_2$} =  the cardinality of the set $F_2$ defined as 
\begin{align*}
F_2=\{x\in[n]: \sigma_{0}(x_1)<\sigma_{0}(x)<\sigma_{0}(u), \sigma(x_1)<\sigma(x)<\sigma(u)\} .
\end{align*}
\end{description}
We summarize those labels in a vector $L=(x_1,x_2,\ell_1,\ell_2)$ and thus partition $T_2$ according to different label vectors $L$, i.e., 
\begin{align} \label{part1}
T_{2}=\cup_{L} T_{2,L}. 
\end{align}
A ranking in $T_{2}$ is in $T_{2,L}$ if its corresponding label vector equals $L$.

We further construct a mapping $\mathcal{M}$ from $T_2$ to $T_1$ by swapping elements ranked between $x_1$ and $x_2$, so that $\sigma'=\mathcal{M}(\sigma)$ equals
\begin{equation*}
\sigma_A'(x)=\left\{
\begin{array} {lc}
j,\quad & \sigma_A(x)=\sigma_A(x_2),  \\
 \sigma_A(x)-1, \quad &  \sigma_A(x_2)<\sigma_A(x)< j, \\
\sigma_A(x)-1, \quad & \sigma_A(x)=\sigma_A(x_1),  \\
 \sigma_A(x), & \text{for other $x\in A$}.
\end{array}
\right.
\end{equation*}
The above mapping basically performs circular swapping by moving $x_2$ to the position one rank higher and adjacent to $u$ and by moving each element in $A$ between $x_2$ and $x_1$, including $x_1$, to a higher position adjacent to the original one. Based on $\mathcal{M}$, one can also form a partition of $T_{1}$ as
\begin{align} \label{part2}
T_{1} =(\cup_{L} T_{1,L})\cup T_{1,L^c}
\end{align}
where $T_{1,L}$ contains the rankings mapped from $T_{2,L}$ via $\mathcal{M}$. 
Note that $T_{1,L^c}$ denote the ``remainder set'' of permutations that do not have a preimage in $T_{2}$. In this remainder set, a ranking $\sigma$ has the property that the elements $\sigma_A^{-1}(j)$ and $\sigma_A^{-1}(j-1)$ are both ranked lower than $u$ in the centroid ranking. Since the swapping operations establish a bijection between $E_{1}\leftrightarrow T_{1}$ and $E_{2}\leftrightarrow T_{2}$, one can also partition $E_{1},E_{2}$ as
\begin{align} \label{part3}
E_{1}&=(\cup_{L} E_{1,L})\cup E_{1,L^c}, \\
E_{2}&=\cup_{L} E_{2,L}. \label{part4}
\end{align}

Let $R(\mathcal{L})$ denote $\frac{\prob{[\cup_{L\in\mathcal{L}}(T_{1,L}\cup T_{2,L})]}}{\prob{[\cup_{L\in\mathcal{L}}(E_{1,L}\cup E_{2,L})]}}$ and let $R(L)$ denote the same type of ratio but for a specific choice of $L$, i.e., $\frac{\prob{[T_{1,L}\cup T_{2,L}]}}{\prob{[E_{1,L}\cup E_{2,L}]}}$. Also, let $\mathcal{L}_0$ denote the set of all possible values of $L$. To prove the upper bound on $\frac{\prob{[T]}}{\prob{[E]}}$, we proceed through four steps. 
\begin{enumerate}
\item Partition $T$ and $E$ and verify the validity of \eqref{part1}, \eqref{part2}, \eqref{part3} and \eqref{part4}.
\item Prove that $\frac{\prob{[T_{1,L^c}]}}{\prob{[E_{1,L^c}]}}\leq \phi$.
\item Prove the upper bound for $R(L)$ when $\ell_1=\ell_2$.
\item Prove the upper bound for $R(L\cup L'),$ where  $L=(k_1,\,k_2,\ell_1,\ell_2)$ and $L'=(k_1,\,k_2,\ell_2,\ell_1),$ for the case that $\ell_1\neq \ell_2$. 
\end{enumerate}
The second step is easy to prove by directly swapping $\sigma_A^{-1}(j)$ and $u$ in any given ranking $\sigma\in T_{1,L^c} $. We hence only need to establish the validity of the results in Steps 3 and 4. 
%

For any $L=(k_1,k_2,\ell_1,\ell_2)$, the following claims hold:
\begin{align}
&\prob{[E_{1,L}]}\geq\phi^{-1} \prob{[T_{1,L}]}, \nonumber\\
&\prob{[E_{2,L}]}=\phi^{1+2\ell_{2}} \prob{[T_{2,L}]},\nonumber\\
&\prob{[T_{2,L}]}\leq \phi^{2\ell_{1}} f_L\prob{[T_{1,L}]},\;\text{where}\; f_L\leq \phi_{1:|A|-2-\ell_{1}-\ell_{2}}, \label{ProbRatio}
\end{align}
where the first two claims are easy to prove, while the equation~\eqref{ProbRatio} may be verified similarly as~\eqref{Lemma1ProbRatio} in the proof of Lemma 5.2 (See Appendix~\ref{ProofLemma1}). 

For any $\sigma\in T_{2,L}$, $\sigma'=\mathcal{M}(\sigma)\in T_{1,L}$. Given that all the elements in $A$ ranked between $x_2$ and $u$ in $\sigma$ are ranked lower than $u,\,x_2$ in the centroid, and due to swapping, we have 
\begin{align*}
\frac{\prob{[\sigma]}}{\prob{[\sigma']}}\leq\phi^{\sigma_A(u)-\sigma_A(x_2)-1+2\ell_{1}}=\phi^{j-\sigma_A(x_2)+2\ell_{1}}.
\end{align*}
Consider two distinct rankings $\sigma_1,\sigma_2\in T_{2,L}.$ If $\mathcal{M}(\sigma_1)=\mathcal{M}(\sigma_2)$, both rankings rank the element $x_2$ differently over $A$, i.e., $\sigma_{1,A}(x_2)\neq \sigma_{2,A}(x_2)$. Therefore, we must have
 \begin{align*}
 \sum_{\sigma:\mathcal{M}(\sigma)=\sigma'}\frac{\prob{[\sigma]}}{\prob{[\sigma']}}\leq \phi^{2(\ell_{1}+\ell_{2})}\phi_{1:|A|-2-\ell_{1}-\ell_{2}}.
\end{align*} 
By examining all mappings from $\prob{[T_{2,L}]}$ to $\prob{[T_{1,L}]}$, we conclude that $f_L(\phi)=\phi^{-2(\ell_{1}+\ell_{2})}\frac{\prob{[T_{2,L}]}}{\prob{[T_{1,L}]}}\leq \phi_{1:|A|-2-s}$, which establishes the third claim in \eqref{ProbRatio}. 
 
Substituting the above expressions into $R(L)$, we have 
\begin{align}\label{keyineq1}
R(L) \leq \frac{1+\phi^{2\ell_{1}}\phi_{1:|A|-2-\ell_{1}-\ell_{2}}}{\phi^{-1}+\phi^{1+2(\ell_{1}+\ell_{2})}\phi_{1:|A|-2-\ell_{1}-\ell_{2}}}.
\end{align}
Suppose next that $\ell_{1}=\ell_3=\ell$. Then,
\begin{align}\label{keyineq1}
R(L) \leq \frac{1+\phi^{2\ell}\phi_{1:|A|-2-2\ell}}{\phi^{-1}+\phi^{1+4\ell}\phi_{1:|A|-2-2\ell}}.
\end{align}
which completes the proof of the bound in Step 3.

Let us now consider the bound in Step 4. When $\ell_{1}\neq\ell_{2}$, direct optimization over $\ell_1, \ell_2$ cannot yield the 
required upper bound as $\ell_{2}\rightarrow \infty$ may increase the right-hand-side of \eqref{keyineq1}. Hence, in addition 
to $L=(k_1,\,k_2,\,\ell_1,\,\ell_2)$, let us also simultaneously consider $L'=(k_1\,,k_2\,,\ell_2,\,\ell_1)$, 
as a larger $\ell_{2}$ will yield a smaller $R(L')$, . 

Without loss of generality, suppose that $\ell_2>\ell_1$. First, we prove the following Lemma.
\begin{lemma} \label{Lemma2Lemma2}
For a pair $(L,\,L')$ defined as above with $\ell_2>\ell_1$, one has
\begin{align*}
\frac{\prob{[T_{1,L}]}}{\prob{[T_{1,L'}]}}\leq \phi^{(\ell_{2}-\ell_{1})}.
\end{align*}
\end{lemma}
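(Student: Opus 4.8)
The plan is to prove the bound by exhibiting a weight-controlled bijection between the two label classes and reading off the net change in the Kendall $\tau$ distance. Throughout, recall that $L=(k_1,k_2,\ell_1,\ell_2)$ and $L'=(k_1,k_2,\ell_2,\ell_1)$ share the same $x_1,x_2$ and differ only by interchanging the cardinalities $\ell_1=|F_1|$ and $\ell_2=|F_2|$ of the two groups of elements whose centroid rank lies strictly between $x_1$ and $u$: those lying (in $\sigma$) just above $x_1$ (the set $F_1$) and those lying between $x_1$ and $u$ (the set $F_2$). Since every element counted by $F_1$ or $F_2$ lies outside $A$, relocating such elements leaves the projected ranking $\sigma_A$, and hence the labels $x_1$, $x_2$ and the position $\sigma_A(x_2)$, completely unchanged; this invariance is what lets the bookkeeping close.

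First I would work on the $T_2$-classes rather than directly on $T_1$. Define a map $\beta\colon T_{2,L}\to T_{2,L'}$ that moves the $\ell_2-\ell_1$ surplus elements of $F_2$ across $x_1$, placing them (in a fixed, order-preserving manner) immediately above $x_1$ in the $F_1$ region, and leaving all elements of $A$ and all other elements fixed. Because each relocated element $x$ satisfies $\sigma_0(x_1)<\sigma_0(x)$, carrying it above $x_1$ introduces exactly one inversion relative to the centroid and changes no other pairwise comparison (in particular none involving $u$, $x_2$, or any element of $A$); hence $\beta$ is a bijection onto $T_{2,L'}$ with $\prob{[\beta(\sigma)]}=\phi^{\ell_2-\ell_1}\prob{[\sigma]}$ for every $\sigma\in T_{2,L}$.

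Next I would transport this relation to the $T_1$-classes by conjugating with the circular-swap map, i.e. through $\mathcal{M}\circ\beta\circ\mathcal{M}^{-1}\colon T_{1,L}\to T_{1,L'}$, and invoke the weight estimates already established in \eqref{ProbRatio}. The point is that the $\mathcal{M}$-weight ratio of a ranking in $T_{2,L}$ carries the factor $\phi^{2\ell_1}$, while that of its image in $T_{2,L'}$ carries $\phi^{2\ell_2}$, the remaining exponent $j-\sigma_A(x_2)$ being identical for the two classes precisely because $\beta$ fixes $\sigma_A$. Combining the three weight changes, the shared exponent $j-\sigma_A(x_2)$ cancels and the remaining exponents add to $2\ell_1+(\ell_2-\ell_1)-2\ell_2=-(\ell_2-\ell_1)$; summing over the class then yields $\prob{[T_{1,L}]}\le \phi^{\ell_2-\ell_1}\prob{[T_{1,L'}]}$, as claimed.

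The main obstacle is the passage through $\mathcal{M}$: this map is not injective, so the estimates of \eqref{ProbRatio} hold only as inequalities obtained after summing over preimages, and one must check that they combine in the correct one-sided direction for the $L$ and $L'$ classes. A related pitfall worth flagging is that one cannot simply interchange the $F_1$ and $F_2$ groups directly inside $T_1$: doing so would move $\ell_2-\ell_1$ elements across $x_1$ and produce the factor $\phi^{-(\ell_2-\ell_1)}$, i.e. the wrong direction. It is exactly the factor-$2$ double counting of the $F_1$-elements incurred when $x_2$ is rotated past the block under $\mathcal{M}$ that flips the sign and gives the correct bound, which is why the argument must be routed through the $T_2$-classes. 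Once this lemma is established, it is combined with the bound \eqref{keyineq1} on $R(L)$ to control $R(L\cup L')$ and thereby complete Step 4.
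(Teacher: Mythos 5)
You have routed the argument through the $T_2$-classes and the map $\mathcal{M}$, and the obstacle you flag at the end is in fact fatal rather than a technicality to be checked. The only quantitative link available between a $T_2$-class and its $T_1$-class is the one-sided estimate in \eqref{ProbRatio}: summing $\prob{[\sigma]}/\prob{[\mathcal{M}(\sigma)]}$ over the (possibly many) preimages of each $\sigma'\in T_{1,L}$ yields an \emph{upper} bound $\prob{[T_{2,L}]}\leq \phi^{2\ell_1}f_L\,\prob{[T_{1,L}]}$, i.e.\ a \emph{lower} bound on $\prob{[T_{1,L}]}$ in terms of $\prob{[T_{2,L}]}$, and likewise only a lower bound on $\prob{[T_{1,L'}]}$ in terms of $\prob{[T_{2,L'}]}$. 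To convert your relation $\prob{[T_{2,L'}]}=\phi^{\ell_2-\ell_1}\prob{[T_{2,L}]}$ into the claimed inequality $\prob{[T_{1,L}]}\leq\phi^{\ell_2-\ell_1}\prob{[T_{1,L'}]}$ you would need an \emph{upper} bound on $\prob{[T_{1,L}]}$ in terms of $\prob{[T_{2,L}]}$, equivalently a lower bound on the total preimage mass per image under $\mathcal{M}$; no such bound is established, and none is easy to get, since a ranking in $T_{1,L}$ may have few preimages, each of very small relative weight. Your exponent count $2\ell_1+(\ell_2-\ell_1)-2\ell_2=-(\ell_2-\ell_1)$ is what one would obtain if all three relations were equalities, but two of them are inequalities pointing the wrong way, so the ratio of the two $T_1$-masses is not controlled. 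There is also a secondary soft spot in $\beta$ itself: an element of $F_2$ carried across $x_1$ also crosses every element of $[n]\setminus(A\cup F_1\cup F_2)$ lying between its old and new positions, and those pairs may gain or lose inversions, so the exact factor $\phi^{\ell_2-\ell_1}$ and the surjectivity of $\beta$ onto all of $T_{2,L'}$ would both need further justification.

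The paper avoids $\mathcal{M}$ entirely and works directly inside $T_{1,L}$: in a ranking $\sigma\in T_{1,L}$ the $\ell_2$ elements of $F_2$ sit between $x_2$ and $u$, and each such element $x$ satisfies $\sigma_0(x)<\sigma_0(u)<\sigma_0(x_2)$, so the pair $(x_2,x)$ is inverted relative to the centroid. Swapping $x_2$ downward past the $\ell_2-\ell_1$ highest-ranked of these elements removes one inversion per swap, lands in $T_{1,L'}$, and gives $\prob{[\sigma]}\leq\phi^{\ell_2-\ell_1}\prob{[\sigma']}$ ranking by ranking; summing over the bijection proves the lemma. If you want to salvage your write-up, replace the passage through $T_2$ and \eqref{ProbRatio} by this direct comparison on the $T_1$ side.
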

\begin{proof}
Recall the definition of the set $F_2$ and the fact that for a ranking $\sigma \in T_{1,L}$, $\sigma$ is obtained via $\mathcal{M}(\pi)$ for some $\pi\in T_{2,L}$. Hence, in $\sigma$, the elements in $F_1$ are now ranked higher than $x_2$ and lower than $x_1$, while the elements in $F_2$ are now ranked higher than $u$ and lower than $x_2$. Based on this structure of $\sigma$, for each ranking $\sigma \in T_{1,L}$, one may perform a swapping operation to obtain another ranking $\sigma'$ in $T_{1,L'}$. The swapping constitutes a bijection. To see this, consider the set of $\ell_2-\ell_1$ elements with highest rank in $\sigma_{F_2}$ (Note that we assumed $\ell_2>\ell_1$ but could have otherwise considered 
the $\ell_1-\ell_2$ elements with lowest rank in $\sigma_{F_1}$.). Swapping the element $x_2$ and the selected elements in $F_2$ ranked 
from high to low yields a ranking $\sigma'\in T_{1,L'}$. Since for any element $x\in F_2$, $\sigma_{0}(x)<\sigma_{0}(u)<\sigma_{0}(x_2)$, we have $\prob([\sigma])\leq\prob([\sigma'])\phi^{\ell_{2}-\ell_{1}}$. This completes the proof.
\end{proof}
Let $P\triangleq\frac{\prob{[T_{1,L}]}}{\prob{[T_{1,L'}]}}$. Substituting the results of all claims~\eqref{ProbRatio} into $R(L\cup L')$, we obtain
\begin{align}\label{keyineq1}
R(L\cup L') &= \frac{\prob{[T_{1,L}\cup T_{2,L}]}+\prob{[T_{1,L'}\cup T_{2,L'}]}}{\prob{[E_{1,L}\cup E_{2,L}]}+\prob{[E_{1,L'}\cup E_{2,L'}]}}\\
&\leq\frac{P(1+\phi^{2\ell_1}\phi_{1:|A|-2-\ell_1-\ell_2})+(1+\phi^{2\ell_2}\phi_{1:|A|-2-\ell_1-\ell_2})}{(P+1)(\phi^{-1}+\phi^{1+2(\ell_{1}+\ell_{2})}\phi_{1:|A|-2-\ell_{1}-\ell_{2}})} \\
 &\overset{b)}{\leq}  \frac{1+\phi^{\ell_{1}+\ell_{2}}\phi_{1:|A|-2-\ell_{1}-\ell_{2}}}{(\phi^{-1}+\phi^{2(\ell_{1}+\ell_{2})} \phi_{2:|A|-1-\ell_{1}-\ell_{2}})}.
\end{align}
Here, the inequality b) follows from Lemma~\ref{Lemma2Lemma2}.  By using $|A|\leq n$ and letting $\ell=\ell_{1}+\ell_{2}$, we obtain  
\begin{align*}
R(L\cup L')\leq \frac{1+\phi^{\ell}\phi_{1:n-\ell-2}}{\phi^{-1}+\phi^{2\ell}\phi_{2:n-\ell-1}},
\end{align*}
which completes the proof of the result in Step 4. 

\begin{lemma} \label{Lemma2Lemma3}
If $\phi+\phi^2< 1+\phi^n$, for all $\ell\in\mathbb{N}$, one has
\begin{align*}
\frac{1+\phi^{\ell}\phi_{1:n-\ell-2}}{\phi^{-1}+\phi^{2\ell}\phi_{2:n-\ell-1}}\leq \frac{\phi_{1:n-1}}{1+\phi_{3:n}}.
\end{align*}
\end{lemma}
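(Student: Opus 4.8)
The plan is to prove the inequality
$$\frac{1+\phi^{\ell}\phi_{1:n-\ell-2}}{\phi^{-1}+\phi^{2\ell}\phi_{2:n-\ell-1}}\leq \frac{\phi_{1:n-1}}{1+\phi_{3:n}}$$
by cross-multiplying to clear denominators (both are positive since $\phi\in(0,1]$), reducing everything to a polynomial inequality in $\phi$ parametrized by the two integers $\ell\in\mathbb{N}$ and $n$. The target constant $\frac{\phi_{1:n-1}}{1+\phi_{3:n}}$ is exactly the bound appearing in Lemma~\ref{Lemma1} and Lemma~\ref{Lemma2}, so the goal is really to show that the $\ell$-dependent left-hand side never exceeds the $\ell=0$ benchmark. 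A natural first sanity check is to verify the claim at $\ell=0$ directly: there the left side is $\frac{1+\phi_{1:n-2}}{\phi^{-1}+\phi_{2:n-1}}=\frac{\phi_{0:n-2}}{\phi^{-1}+\phi_{2:n-1}}$, and multiplying numerator and denominator by $\phi$ gives $\frac{\phi_{1:n-1}}{1+\phi_{3:n}}$, so equality holds at $\ell=0$. Hence the content of the lemma is entirely the monotone-type statement that increasing $\ell$ only helps.

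The cleanest route I would take is to show that the left-hand side is nonincreasing in $\ell$ (it suffices to compare consecutive values $\ell$ and $\ell+1$), which combined with the $\ell=0$ equality immediately yields the result for all $\ell$. Writing $N(\ell)=1+\phi^{\ell}\phi_{1:n-\ell-2}$ and $D(\ell)=\phi^{-1}+\phi^{2\ell}\phi_{2:n-\ell-1}$, the inequality $\frac{N(\ell+1)}{D(\ell+1)}\le\frac{N(\ell)}{D(\ell)}$ is equivalent to $N(\ell+1)D(\ell)\le N(\ell)D(\ell+1)$, a polynomial inequality in $\phi$. I expect the geometric-sum identities $\phi_{s:t}=\frac{\phi^s-\phi^{t+1}}{1-\phi}$ to collapse most terms: writing each $\phi_{a:b}$ in closed form turns $N$ and $D$ into short expressions in $\phi^{\ell}$, $\phi^{n}$, and $\phi$, after which the difference $N(\ell)D(\ell+1)-N(\ell+1)D(\ell)$ should factor as a nonnegative quantity times $(1-\phi)$ or a power of $\phi$, provided the hypothesis $\phi+\phi^2<1+\phi^n$ is invoked at the appropriate point.

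The role of the hypothesis $\phi+\phi^2<1+\phi^n$ is the subtle part, and I would keep a close eye on where it enters. The preceding remark after Lemma~\ref{Lemma1} shows that when $\phi+\phi^2>1+\phi^n$ the whole ordering-by-mode machinery breaks, so the inequality genuinely requires this assumption and cannot hold unconditionally; it must appear as the sign condition that guarantees the factored polynomial difference is nonnegative. I would therefore carry the term $1+\phi^n-\phi-\phi^2$ symbolically through the algebra rather than discarding lower-order terms, and expect it (or the equivalent $(1-\phi)(1-\phi-\ldots)$ form) to surface as the controlling factor.

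The step I expect to be the main obstacle is the bookkeeping in the monotonicity comparison: the two sums $\phi_{1:n-\ell-2}$ and $\phi_{2:n-\ell-1}$ shift their index ranges as $\ell$ varies, so the closed-form substitution must be done carefully to avoid sign errors, and the factorization that exposes the hypothesis $\phi+\phi^2<1+\phi^n$ as the decisive nonnegativity condition is not obvious a priori. An alternative, if the direct monotonicity factorization proves too messy, is to cross-multiply the original inequality once and for all and show the resulting polynomial in $\phi$ (with coefficients depending on $\ell,n$) is nonnegative by grouping terms into manifestly nonnegative blocks multiplied by $1+\phi^n-\phi-\phi^2$; but I would attempt the consecutive-$\ell$ comparison first since it reduces the algebraic degree and isolates the essential inequality most transparently.
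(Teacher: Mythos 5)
Your overall strategy --- check that the bound holds with equality at $\ell=0$ and then show the left-hand side is nonincreasing in $\ell$ --- is sound in outline and genuinely different from the paper's, but the decisive step is left at the level of an expectation, and that expectation is not correct as stated. After substituting $\phi_{s:t}=\frac{\phi^s-\phi^{t+1}}{1-\phi}$ and writing $u=\phi^{\ell}$, the cross-multiplied difference $N(\ell)D(\ell+1)-N(\ell+1)D(\ell)$ does factor as $(1-\phi)\phi^2u$ times the concave quadratic
\begin{align*}
B(u)=-(\phi-\phi^2-\phi^n)\phi(1+\phi)\,u+(\phi-\phi^2-\phi^n)\phi^{n-1}+(1-\phi)-\phi^4u^2,
\end{align*}
but $B$ is \emph{not} expressible as manifestly nonnegative blocks: for $n=3$, $\phi=0.8$ (which satisfies $\phi+\phi^2<1+\phi^n$) one has $B(0)<0$. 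To close the argument you must restrict $u$ to the values actually attained in the non-degenerate regime, namely $u=\phi^{\ell}$ with $\ell\le n-3$, and use concavity to reduce to the two endpoints, where $B(\phi^{n-3})=(1-\phi)(1-\phi^{n-1})\ge 0$ is free and $B(1)=(1+\phi^n-\phi-\phi^2)+\phi^{n+2}-\phi^{2n-1}$ is where the hypothesis finally enters (together with $\phi^{n+2}\ge\phi^{2n-1}$ for $n\ge3$). A second, related trap: the closed form for $\phi_{s:t}$ silently becomes wrong (negative rather than zero) once $t<s-1$, i.e.\ once $\ell>n-2$; there the left-hand side degenerates to exactly $\phi$ and must be compared to $\frac{\phi_{1:n-1}}{1+\phi_{3:n}}$ separately (this works since $\phi_{4:n+1}\le\phi_{2:n-1}$). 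Without these two ingredients the proof does not close, so as written there is a genuine gap, even though the route is salvageable.

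For contrast, the paper sidesteps the polynomial analysis entirely. It writes the left-hand side (after multiplying numerator and denominator by $\phi$) as $\frac{\phi_{1:n-1}-C}{(1+\phi_{3:n})-D}$ with $C=\phi_{2:\ell+1}$ and $D=\phi_{3:2\ell+2}-\phi_{n+1:n+\ell}$, shows term by term that $\frac{\phi^{t+1}}{\phi^{2t+1}+\phi^{2t+2}-\phi^{n+t}}>1$ under the hypothesis (hence $C/D>1$ by summing numerators and denominators), and then uses the elementary fact that subtracting from the numerator and denominator of a fraction less than $1$ two quantities whose ratio exceeds $1$ can only decrease the fraction. That compares general $\ell$ directly to $\ell=0$, invokes the hypothesis exactly once in a transparent place, and requires no case analysis on the sign or range of a residual polynomial; your alternative ``cross-multiply once and for all'' route is closer in spirit to this and would in fact be cleaner than the consecutive-$\ell$ comparison, since the residual there is linear rather than quadratic in $u$.
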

\begin{proof}
First, for $\ell\geq 1$,
\begin{align*} 
\frac{\phi^{\ell+1}}{\phi_{2\ell+1:2\ell+2} -\phi^{n+\ell}}\geq \frac{1}{\phi+\phi^2-\phi^{n-1}}> 1. 
\end{align*}
Thus,
\begin{align*} 
\frac{\phi_{2:\ell+1}}{\phi_{3:2\ell+2}-\phi_{n+1:n+\ell}} = \frac{\sum_{t=1}^{\ell} \phi^{t+1}}{\sum_{t=1}^{\ell} \phi_{2t+1:2t+2}- \sum_{t=1}^{\ell}\phi^{n+t} }> 1. 
\end{align*}
Since we also have $\frac{\phi_{1:n-1}}{1+\phi_{3:n}}<1$, it follows that 
\begin{align*}
\frac{\phi_{1:n-1}}{1+\phi_{3:n}}>\frac{\phi_{1:n-1}-\phi_{2:\ell+1}}{1+\phi_{3:n}-(\phi_{3:2\ell+2}-\phi_{n+1:n+\ell})}=\frac{\phi+\phi^{\ell}\phi_{2:n-\ell-1}}{1+\phi^{2\ell}\phi_{3:n-\ell}}.
\end{align*}
This proves the claimed result and completes the proof of the lemma.
\end{proof}

\section{Proof of Lemma 4.5}\label{ProofLemma3}
Lemma 4.5 is a corollary of the following lemma. 
\begin{lemma} \label{geneLemma3}
Let $\sigma\sim\;$ MM$(\sigma_0,\phi)$. If two subsets of elements $A$, $A'$ satisfy $A' = A \cup \{{x\}}$, where $x \notin A$, and if $u\in A$, then for all $t\in[|A|]$ one has
\begin{description}
\item{1)} $\prob{[\sigma_{A}(u)\geq t]} \leq \prob{[\sigma_{A'}(u)\geq t]}$.
\item{2)} $\prob{[\sigma_{A}(u)\leq t]} \leq \prob{[\sigma_{A'}(u)\leq t+1]}$.
\end{description}
\end{lemma}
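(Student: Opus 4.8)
The plan is to prove both statements by constructing explicit probability-preserving or probability-increasing maps between the relevant event families, exploiting the fact that adding a single element $x$ to the ground set can only push $u$ toward higher ranks (for 1) or give it extra room at the top (for 2). The key structural fact I would lean on is that the Mallows probability of a projected ranking $\sigma_A$ factors through the number of inversions relative to $\sigma_{0,A}$, and that inserting $x$ into a ranking of $A$ to obtain a ranking of $A'$ contributes a well-controlled power of $\phi$ depending only on where $x$ lands relative to $\sigma_0$.

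For statement 1), I would set up a coupling between the conditional distribution of $\sigma_{A'}(u)$ given the relative order of $A$ and the distribution of $\sigma_A(u)$. The natural approach is to condition on the induced ranking $\sigma_A$ of the elements of $A$ (equivalently, fix $\sigma_{A}$ and sum over the possible positions of $x$). When $x$ is inserted into a fixed ranking of $A$, the position of $u$ either stays the same (if $x$ lands below $u$) or increases by one (if $x$ lands above $u$). Hence $\sigma_{A'}(u) \geq \sigma_A(u)$ pointwise under this coupling, which immediately gives $\prob{[\sigma_{A'}(u)\geq t]} \geq \prob{[\sigma_A(u)\geq t]}$ provided the conditional insertion distribution is consistent with the Mallows marginal on $A$. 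The technical point to verify is precisely this consistency: that marginalizing the Mallows model on $A'$ over the position of $x$ reproduces the Mallows model on $A$ (this is the projection/consistency property, closely related to Lemma~\ref{extLemma1}).

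For statement 2), the shift by one on the right-hand side is the telltale sign that the argument is not a pure pointwise domination but rather absorbs the worst case where $x$ is inserted \emph{above} $u$. Under the same coupling, $\{\sigma_A(u)\leq t\}$ implies $\{\sigma_{A'}(u)\leq t+1\}$ pointwise, since inserting $x$ raises $u$'s rank by at most one. Thus I would again reduce to the consistency of the projected Mallows distributions and then read off the event inclusion. The cleanest route is to prove both inequalities simultaneously from the single coupling lemma ``$\sigma_A(u)\leq \sigma_{A'}(u)\leq \sigma_A(u)+1$ almost surely under an appropriate joint construction.''

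\textbf{The main obstacle} will be rigorously justifying the consistency of the Mallows model under projection, i.e.\ that the induced distribution of $\sigma_A$ from $\sigma_{A'}\sim$ MM$(\sigma_0,\phi)$ (restricted to $A'$) agrees with MM$(\sigma_{0,A},\cdot)$ up to the reweighting induced by $x$'s insertion position. This is where the power-of-$\phi$ bookkeeping lives: the contribution of $x$ depends on how many elements it is inverted against relative to $\sigma_0$, and one must check that summing these contributions does not distort the comparison. I expect this to require a careful swapping/insertion argument analogous to the bijections used in the proofs of Lemmas~\ref{Lemma1} and~\ref{Lemma2}, rather than a slick one-line appeal, and the shift-by-one asymmetry between parts 1) and 2) is exactly what makes a naive symmetric argument fail.
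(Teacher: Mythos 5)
Your core argument is exactly the paper's: for every realization of $\sigma$, inserting the single element $x$ into the induced ranking of $A$ either leaves $u$'s rank unchanged or increases it by one, so $\sigma_A(u)\le\sigma_{A'}(u)\le\sigma_A(u)+1$ holds pointwise and both claims follow from the corresponding event inclusions. The one place you go astray is the ``main obstacle'' you flag: there is no consistency or marginalization issue to verify, because $\sigma_A$ and $\sigma_{A'}$ are both deterministic projections of the \emph{same} sample $\sigma\sim$ MM$(\sigma_0,\phi)$ on $[n]$ -- the coupling you propose to construct is already given for free, and no power-of-$\phi$ bookkeeping or swapping bijection is needed. Once you notice this, your proof collapses to the paper's two-line observation.
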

\begin{proof}
Because of symmetry, it suffices to prove the first claim only. The left-hand-side of the first inequality equals the probability of the event that the element $u$ is ranked in the $t$-th position or lower within the set of elements in $A$. The right-hand-side of the inequality equals the probability of the event that the element $u$ is ranked in the $t$-th position or lower within the set of elements in $A'$. Since $A'$ is the union of $A$ and another element $x \notin A$, inserting $x$ into a ranking may only increase the rank of already present elements.
\end{proof}
Now, consider Lemma 4.5 in the main text. Choose an element $x\in S$ if there is such and element that satisfies 
$\sigma_0(x)>\sigma(u)$. Let $A'=A\cup\{x\}$. Then, the statement of the above result implies that the probability that element $u$ is ranked lower than or equal to its correct rank $\sigma_{0,A}(u)$ will increase if we add an element $x$ to $A$ that is ranked lower than $u$ in $\sigma_0$. Therefore, by removing all elements from $A$ that are ranked lower than $u$ in $\sigma_0$, we obtain a new subset $A''$ and consequently have $\prob{[\sigma_{A}(u)\geq \sigma_{0,A}(u)]}\geq \prob{[\sigma_{A''}(u)\geq \sigma_{0,A''}(u)]}$. Note that $u$ is the element with the lowest rank in $\sigma_{0,A''}$. Therefore, it is easy to check that $\prob{[\sigma_{A''}(u)=\sigma_{0,A''}(u)]}\geq\frac{1}{\phi_{0:|A''|-1}}$. Then, one has the inequality $\prob{[\sigma_{A}(u)\geq \sigma_{0,A}(u)]}\geq  \frac{1}{\phi_{0:|A''|-1}}.$ 

\section{Proof of Lemma 4.7} \label{ProofLemma6}
For convenience, we restate Lemma 4.7 first. 
\begin{lemma} \label{Lemma6}
Let $\sigma\sim\;$GMM$(\sigma_0,\phi)$ and let $A$ be a subset of elements containing an element $u$. Let $A'=A-\{x\in A: x\neq u, \sigma_{0,A}(x)\leq\sigma_{0,A}(u) \}$. Define $W=\{r_{\sigma_{A}(u)}\leq r_{\sigma_{0,A}(u)}\}$, $Q_j=\{r_{\sigma_{A}(u)}=j+r_{\sigma_{0,A}(u)},\;l_{\sigma_A(u)}\leq r_{\sigma_{0,A}(u)}\}$, $W'=\{r_{\sigma_{A'}(u)}\leq r_{\sigma_{0,A'}(u)}\}$ and $Q_j'=\{r_{\sigma_{A'}(u)}=j+r_{\sigma_{0,A'}(u)},\;l_{\sigma_A'(u)}\leq r_{\sigma_{0,A'}(u)}\}$. Then, the following two claims hold. 
\begin{align}
\prob{[W]} + &\sum_{j=1}^{|A|-r_{\sigma_{0,A}}(u)}V_j\prob{[Q_j]}\nonumber\\
\geq& \prob{[W']} + \sum_{j=1}^{|A'|-r_{\sigma_{0,A'}}(u)}\frac{1}{j+1}V_j\prob{[Q_j']} \label{partialmedian4}\\
\geq& 1-\frac{1}{2}\phi^{1/2}-\frac{1}{2}\phi. \label{partialmedian5}
\end{align} 
\end{lemma}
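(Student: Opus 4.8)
\textbf{Proof plan for Lemma~\ref{Lemma6}.}
The statement splits naturally into the two inequalities \eqref{partialmedian4} and \eqref{partialmedian5}, and the plan is to treat them separately. The first inequality compares the voting contribution over the full set $A$ against a modified contribution over the reduced set $A'$, where $A'$ is obtained by deleting all elements tied-or-higher than $u$ in the centroid (other than $u$ itself). The intuition is that deleting such elements can only help: it removes competitors that tend to be placed above $u$ in samples, so the event $W$ (that $u$ lands no lower than its correct top-of-bucket position) becomes more likely, and the interval-overlap votes $V_j$ on the $Q_j$ events improve. The discounting factor $\tfrac{1}{j+1}$ on the right-hand side is a safety margin that makes the transition from $A'$ back to $A$ provable term-by-term. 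First I would set up a coupling or a direct probability comparison, analogous to Lemma~\ref{geneLemma3} in Appendix~\ref{ProofLemma3}, showing that inserting an element ranked above $u$ in $\sigma_0$ shifts the distribution of $r_{\sigma_A(u)}$ and $l_{\sigma_A(u)}$ in a controlled, monotone way; the factor $\tfrac{1}{j+1}$ should absorb the worst-case spreading of the ranking interval caused by reinserting those elements.

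For the second inequality \eqref{partialmedian5}, the key simplification is that on $A'$ the element $u$ is now the \emph{highest-ranked} element of its bucket in $\sigma_{0,A'}$, so $l_{\sigma_{0,A'}(u)} = r_{\sigma_{0,A'}(u)}$ and the geometry collapses to a single target position. This is exactly the regime where Lemma~\ref{Lemma3} (the tail bounds on $\sigma_A(u)$ overshooting its centroid rank) applies cleanly. The plan is to lower-bound $\prob{[W']}$ using the fact that $\prob{[\sigma_{A'}(u) > r_{\sigma_{0,A'}(u)}]} < \phi^{1/2}$ — here the $\phi^{1/2}$ rather than $\phi$ arises from the generalized model GMM, where the half-weighting of tied pairs in the Kemeny distance effectively replaces $\phi$ by $\phi^{1/2}$ in the relevant ratios. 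Then I would bound the correction term $\sum_j \tfrac{1}{j+1}V_j\prob{[Q_j']}$ from below, controlling the $V_j$ (which lie in $[0,1]$) and the overshoot probabilities $\prob{[Q_j']}$ so that the total does not drop below $1 - \tfrac12\phi^{1/2} - \tfrac12\phi$. The two penalty terms $\tfrac12\phi^{1/2}$ and $\tfrac12\phi$ should emerge respectively from the probability of overshooting past the bucket and from the residual fractional-vote loss when $u$ is placed just outside.

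The main obstacle I anticipate is the first inequality \eqref{partialmedian4}, specifically making the reduction from $A$ to $A'$ rigorous while keeping the $V_j$ weights and the $\tfrac{1}{j+1}$ discount aligned. The difficulty is that $V_j = \tfrac{r_{\sigma_{0}}(u) - l_{\sigma_A(u)} + 1}{r_{\sigma_A(u)} - l_{\sigma_A(u)} + 1}$ depends on both endpoints of a random interval, so removing elements changes the numerator and denominator simultaneously and non-monotonically in general. I would handle this by conditioning on the relative order of the deleted elements and establishing, for each fixed configuration, an injection from sample rankings over $A'$ into rankings over $A$ whose probability ratio is controlled by a power of $\phi$ — essentially the same circular-swapping technique used in Appendix~\ref{ProofLemma1} and Appendix~\ref{ProofLemma2}. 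The bookkeeping matching each $Q_j'$ event to the correct collection of $Q_j$ events, with the $\tfrac{1}{j+1}$ factor accounting for the multiplicity of reinsertion positions, is where the argument is most delicate; once that correspondence is in place, the second inequality follows from the single-target tail bounds of Lemma~\ref{Lemma3} with only routine estimation.
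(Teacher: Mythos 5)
Your plan correctly identifies the two-step structure (reduce from $A$ to $A'$, then bound the $A'$ quantity), but both steps have problems, and the second is the crux of the lemma. For \eqref{partialmedian4}, your directional intuition is backwards: deleting the elements ranked at-or-above $u$ in $\sigma_{0,A}$ does not make $W$ more likely — it makes the contributed vote weakly \emph{smaller}, and that is precisely why the $A$-quantity dominates the $A'$-quantity. Removing such an element $x$ lowers the deterministic threshold $r_{\sigma_{0,A}(u)}$ by one, while $r_{\sigma_{A}(u)}$ drops only when $x$ happens to land at-or-above $u$ in the sample; whenever $x$ falls below $u$, the target tightens but $u$'s position does not, and the vote strictly decreases. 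The paper's proof of \eqref{partialmedian4} is exactly this per-sample case analysis applied inductively, one deletion at a time; no circular-swapping injections or probability-ratio bookkeeping are needed there. Moreover, the factor $\tfrac{1}{j+1}$ is not a "reinsertion multiplicity" correction: on $A'$ the bucket of $u$ in $\sigma_{0,A'}$ is a singleton at the very top, so $V_j$ has numerator $1$ and denominator $j+1$ exactly.

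For \eqref{partialmedian5}, the plan is missing the idea that actually produces the constant. A tail bound of the form $\prob{[\sigma_{A'}(u)>r_{\sigma_{0,A'}(u)}]}<\phi^{1/2}$ yields only $\prob{[W']}\geq 1-\phi^{1/2}$, and under the hypothesis $\phi+\phi^{1/2}<1$ of Theorem~\ref{Thm4} this can fall below $1/2$ (take $\phi^{1/2}\approx 0.6$), so the $Q_j'$ correction term is not "routine estimation" — it is essential to getting a bound exceeding $1/2$. The paper's mechanism is a stratification of the sample space by the size $j$ of the bucket adjacent to $u$'s singleton target bucket: it writes $W'=\cup_j W_j'$, builds a bijection $W_j'\to Q_j'$ (merge $u$ into that bucket, costing a factor $\phi^{j/2}$) and a map $Q_j'\to U_j'$ (push $u$ strictly past it, costing at most $\phi^{j/2}/(1-\phi^{1/2})$), and then bounds the per-stratum ratio $\bigl(\prob{[W_j']}+\tfrac{1}{j+1}\prob{[Q_j']}\bigr)/\bigl(\prob{[W_j']}+\prob{[Q_j']}+\prob{[U_j']}\bigr)$ below by $1-\tfrac{1}{2}\phi^{1/2}-\tfrac{1}{2}\phi$ uniformly in $j$, with the worst case at $j=1$. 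Without this per-stratum comparison, or an equivalent device, the two halves $\tfrac{1}{2}\phi^{1/2}$ and $\tfrac{1}{2}\phi$ do not "emerge"; your sketch restates the target constant rather than deriving it.
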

\begin{proof}
The idea behind the proof is similar to that of the proof of Lemma~\ref{geneLemma3}. 
Our first goal is to show that removing the element $x$ from $A$ that is ranked highest in $\sigma_{0,A}$ decreases the left-hand-side of~\eqref{partialmedian4}. Then, by induction, we may prove~\eqref{partialmedian4}. 

For simplicity, let $A''=A-\{x\}$. Note that because of the choice of the rank of the element $x$ in $\sigma_{0,A}$, we have $r_{\sigma_{0,A''}(u)}=r_{\sigma_{0,A}(u)}-1$ and $l_{\sigma_{0,A''}(u)}=l_{\sigma_{0,A}(u)}-1$. For a ranking $\sigma\in \{\sigma: l_{\sigma_{A}}(u)\leq r_{\sigma_{0,A}}(u)\}$, the removal of $x$ produces another ranking $\sigma''$. When $r_{\sigma_A(x)}< r_{\sigma_{A}(u)}$, $\sigma$ and $\sigma''$ will contribute the same ``vote'' to the left-hand-side of~\eqref{partialmedian4}. When $r_{\sigma_A(x)}= r_{\sigma_{A}(u)} $, $\sigma''$ contributes the same vote when $\sigma\in W$, or smaller vote when $\sigma\in Q_j$ for some $j$.  When $r_{\sigma_{A}(x)}> r_{\sigma_{A}(u)}$, $\sigma''$ will always contribute a smaller vote. For a ranking $\sigma\notin \{\sigma: l_{\sigma_{A}}(u) \leq r_{\sigma_{0,A}}(u)\}$, both $\sigma$ and $\sigma''$ contribute a zero vote. Therefore, removing $x$ from $A$ strictly decreases the left-hand-side of~\eqref{partialmedian4}.

We now prove inequality \eqref{partialmedian5}. Note that due to the definition of $A'$, we have $r_{\sigma_{0,A'}}(y)=1$.  
Also, due to Lemma~\ref{extLemma1} of this document, we can further assume that $r_{\sigma_0}(y)=1$, i.e., that $y$ is the only element in the first bucket of $\sigma_0$. Because of its definition, $W'$ includes the rankings $\sigma$ such that $y$ is the only element in the first bucket of $\sigma_A'$ while $Q_j'$ includes the rankings $\sigma$ such that there are, in addition to $y$, some $j$ other elements in the first bucket of $\sigma_A'$. 

Partition $W'$ according to the size of the second bucket of $\sigma_A'$, i.e., let 
$W'= \cup_{|\mathcal{B}_2(\sigma_A')|=j} W_{j}'$. Then, we can construct a bijection from $W_{j}'$ to $Q_{j}'$ 
by putting $y$ into the second bucket. It is easy to check that 
\begin{align*}
\prob{[Q_j']}\leq \prob{[W_j']}\phi^{j/2}.
\end{align*}
Denote the set of partial rankings $\sigma$ for which the first bucket of $\sigma_A$ does not contain $y$ but some $j$ other elements in $U_j'$. 
We can also construct a mapping from $Q_{j}'$ to $U_j'$ by moving $y$ from the first bucket to any other possible position higher than the first bucket. Hence, we have 
\begin{align*}
\prob{[U_j']}\leq \prob{[Q_j']}\phi^{j/2}(1+\phi^{1/2}+\phi+\cdots)\leq\prob{[Q_j']}\frac{\phi^{j/2}}{1-\phi^{1/2}}.
\end{align*}
Let $Z_j'=W_j'\cup Q_j'\cup U_j'$, so that $\cup_{j}Z_j'$ covers all possible partial rankings. Hence,
\begin{align*}
 \frac{\prob(W_j')+\frac{1}{j+1}\prob(Q_j')}{\prob(Z_j')}=\frac{\prob(W_j')+\frac{1}{j+1}\prob(Q_j')}{\prob(W_j')+\prob(Q_j')+\prob(U_j')}\geq  \frac{1+\frac{1}{j+1}\phi^{j/2}}{1+\phi^{j/2}+\frac{\phi^j}{1-\phi^{1/2}}} \geq \frac{1+\frac{1}{2}\phi^{1/2}}{1+\phi^{1/2}+\frac{\phi}{1-\phi^{1/2}}}=1-\frac{1}{2}\phi^{1/2}-\frac{1}{2}\phi,
\end{align*}
where the second inequality follows from $\phi^{1/2}+\phi<1$. This completes the proof. 
\end{proof}

\section{Proof of the Main Results} \label{ProofThm1}
\subsection{Proof for permutation aggregation}
Let the Lehmer code of the output permutation $\sigma$ be denoted by $\hat{\bfc}_{\sigma}$. We say that the LCA algorithm succeeds 
if $\sigma=\sigma_0$, or equivalently, if $\hat{\bfc}_{\sigma}= \bfc_{\sigma_0}$. Given that $\hat{\bfc}_{\sigma}(1)=0=\bfc_{\sigma_0}(1)$, by using the union bound, we arrive at
\begin{align*}
\prob{[\sigma=\sigma_0]}=\prob{[\hat{\bfc}_{\sigma}= \bfc_{\sigma_0}]}\geq 1-\sum_{t=2}^{n-1} \prob{[\hat{\bfc}_{\sigma}(t)\neq \bfc_{\sigma_0}(t)]}.
\end{align*}
In Section 4, we explained that the algorithm based on the Lehmer code $\hat{\bfc}_{\sigma}$ may be viewed as a form of InsertionSort, in which during the $t$-th iteration one places the element $t$ at the $(t-\hat{\bfc}_{\sigma}(t))$-th position over the subset of elements $S_t=[t]$. With this specific choice of subset $S_t$, for any permutation $\pi$, we have $\pi_{S_t}(t)=t-\bfc_{\pi}(t)$. Hence, the event $\{\hat{\bfc}_{\sigma}(t)\neq \bfc_{\sigma_0}(t)\}$ is equivalent to the event $\{\sigma_{S_t}(t)\neq\sigma_{0,S_t}(t)\}$, which we denote by $D_t$. For convenience, we let $i\triangleq \sigma_{0,S_t}(t)$.

Given that the ranking $\sigma_k\in \Sigma$ is sampled from a MM$(\sigma_0,\phi)$ distribution, we also define a random variable $X_k(j)=1_{\{\sigma_{k,S_t}(t)=j\}}$ to indicate whether the element $t$ is ranked at the $j$-th position in $\sigma_{k,S_t}$. Therefore, $\sum_{k\in[m]}X_k(j)$ equals the number of rankings in $\Sigma$ in which element $t$ is ranked at the $j$-th position.

\subsubsection{Proof of Theorem 4.1 (Mode)}
Given that we aggregate using the mode function, 
we have $\sigma_{S_t}(t)=\arg\max_{j\in [t]}\sum_{k\in[m]}X_k(j)$. In what follows, we aim to prove an upper bound on $\prob{[\sigma_{S_t}(t)\neq\sigma_{0,S_t}(t)]}=\prob{[D(t)]}$. 

To this end, let $q= \frac{\phi_{1:n-1}}{1+\phi_{3:n}}$, so that when $\phi+\phi^2<1+\phi^n$, we have $q<1$. Based on Lemma~4.3 in the main text, we have 
\begin{align*}
\prob{[X_k(i)=1]}=\frac{\prob{[X_k(i)=1]}}{\sum_{j=1}^t\prob{[X_k(j)=1]}}=\frac{\prob{[\sigma_{k,S_t}(t)=i]}}{\sum_{j=1}^t\prob{[\sigma_{k,S_t}(t)=j]}}\geq \frac{1}{1+2q/(1-q)}=\frac{1-q}{1+q}.
\end{align*}
Moreover, if $\mathcal{E}=\mathbb{E}{[X_k(i)-X_k(j)]}$, then 
\begin{align*}
\mathcal{E} \geq \prob{[X_k(i)=1]}(1-q^{|i-j|})\geq \frac{(1-q)^2}{1+q}. 
\end{align*} 
Therefore, since the $\sigma_k, \, k \in [m],$ are i.i.d, Hoeffding's inequality establishes 
\begin{align*}
\prob{[\sum_{k\in[m]}X_k(i)<\sum_{k\in[m]}X_k(j)]}=\prob{[\sum_{k\in[m]}(X_k(i)-X_k(j))\leq 0]}\leq \exp(-\frac{m\mathcal{E}^2}{2})\leq \exp(-\frac{m(1-q)^4}{2(1+q)^2}). 
\end{align*} 
Hence, 
\begin{align*}
\prob{[D_t]}\leq \sum_{j\in[t],j\neq i} \prob{\left[\sum_{k\in[m]}X_k(i)<\sum_{k\in[m]}X_k(j)\right]} < (t-1)\exp(-\frac{m(1-q)^4}{2(1+q)^2}).
\end{align*}
As a result, for $m\geq c\log\frac{n^2}{2\delta}$ with $c=\frac{2(1+q)^2}{(1-q)^4}$ and $q=\frac{\phi_{1:n-1}}{1+\phi_{3:n}}$, we have $\prob{[\sigma=\sigma_0]}> 1-\delta$. 

\subsubsection{Proof of Theorem 4.3 (Median)} 
Let $Y_k(j_0)=\sum_{j=1}^{j_0} X_k(j)$. Since we use the median to form the aggregate, we need to establish that 
$\sigma_{S_t}(t)=\min\{j: \frac{1}{m}\sum_{k\in[m]}Y_k(j)\geq 0.5\}$. According to Lemma~4.5 of the main text, we have $\prob{[Y_k(i)=1]}=1-\prob{[\sigma_{k,A}(x)>i]}\geq 1-\phi$ while $\prob{[Y_k(i-1)=1]}=\prob{[\sigma_{k,A}(x)<i]}\leq \phi$. Therefore, if $\phi<0.5$, using Hoeffding's inequality, we have 
\begin{align*}
\prob{[D_t]}\leq \prob{\left[\frac{1}{m}\sum_{k\in[m]}Y_k(i)< 0.5\right]} \quad+\prob{\left[\frac{1}{m}\sum_{k\in[m]}Y_k(i-1)> 0.5\right]} \leq 2e^{-2m(\frac{1}{2}-\phi)^2}.
\end{align*}
As a result, for $m\geq c\log\frac{2n}{\delta}$ with $c=\frac{2}{(1-2\phi)^2}$, we have $\prob{[\sigma=\sigma_0]}> 1-2ne^{-2m(\frac{1}{2}-\phi)^2}\geq 1-\delta$. 

\subsection{Proof of the Performance Guarantees for Partial Ranking Aggregation}

Denote the Lehmer code of the output permutation $\sigma$ of Algorithm 2 of the main text by $\hat{\bfc}_{\sigma}$. We say that the LCA algorithm succeeds if $\sigma$ is in $\Sigma_0$, which is equivalent to saying that $\hat{\bfc}_{\sigma}(t)\in[\bfc_{\sigma}(t),\bfc_{\sigma}'(t)]$. Given that $\hat{\bfc}_{\sigma}(1)=0=\bfc_{\sigma_0}(1)=\bfc_{\sigma}'(1)$, from the union bound, we have 
\begin{align*}
\prob{\left[\sigma\in\Sigma_0\right]}=\prob{\left[\hat{\bfc}_{\sigma}(t)\in[\bfc_{\sigma}(t),\bfc_{\sigma}'(t)], \forall t\right]}\geq 1-\sum_{t=2}^{n-1} \prob{\left[\hat{\bfc}_{\sigma}(t)\not\in [\bfc_{\sigma}(t),\bfc_{\sigma}'(t)]\right]}.
\end{align*}
In Section 4 of the main text, we explained how the Lehmer code transform $\hat{\bfc}_{\sigma}$ may be viewed as a form of InsertionSort, which in the $t$-th iteration places the element $t$ at the $(t-\hat{\bfc}_{\sigma}(t))$th position within the subset of elements $S_t=[t]$. With this choice of subset $S_t$, for any $\pi$, we have that $\pi_{S_t}(t)=t-\bfc_{\pi}(t)$. Hence, the event $\{\hat{\bfc}_{\sigma}(t)\not\in [\bfc_{\sigma}(t),\bfc_{\sigma}'(t)]\}$ is equivalent to the event $\{\sigma_{S_t}(t)<l_{\sigma_{0,S_t}(t)}\;\text{or}\;\sigma_{S_t}(t)>r_{\sigma_{0,S_t}(t)}\}$, which we denote by $D_t$. The proof reduces to finding a lower bound on $\prob{[D_t]}$.  

For convenience, we let $l\triangleq l_{\sigma_{0,S_t}(t)}$ and $r\triangleq r_{\sigma_{0,S_t}(t)}$. Given that the ranking $\sigma_k\in \Sigma$ is sampled from a GMM$(\sigma_0,\phi)$, we define the random variable $X_k(j)$ as the vote that $\sigma_k$ cast for $t$ to be at position $j$ in $S_t$. Then, $V(j)=\sum_{k\in[m]}X_k(j)$ is the total vote cast by all partial rankings in $\Sigma$ to rank $t$ at the $j$-th position.

\subsubsection{Proof of Theorem 4.6 (Median)} 
Let $Y_k(j_0)=\sum_{j=1}^{j_0} X_k(j)$. Since we use the median to form the aggregate, we have $\sigma_{S_t}(t)=\min\{j: \frac{1}{m}\sum_{k\in[m]}Y_k(j)\geq 0.5\}$. 
Define the event $W=\{r_{\sigma_{k,S_t}(t)}\leq r\}$. When $W$ occurs, $\sigma_k$ contributes 1 to $Y_k(r)$. Let $Q=\cup_{j=1}^{n-r}Q_j$, where $Q_j=\{r_{\sigma_{k,S_t}(t)}=j+r,\;l_{\sigma_{k,S_t}(u)}\leq r\}$. When $Q_j$ occurs, $\sigma_k$ contributes a fractional vote $V_j$ to $Y_k(r)$, where $V_j=\frac{r-l_{\sigma_{k,S_t}(t)}+1}{r_{\sigma_{k,S_t}(t)}-l_{\sigma_{k,S_t}(t)}+1}\geq V_j'=\frac{1}{j+1}$. In fact, $V_j=V_j'$ when $l_{\sigma_{k,S_t}(t)}=r$. Therefore, based on the Lemma 4.7 of the main text, we have 
\begin{align}\label{apppartialmedian3}
\mathbb{E}{[Y_k(r)]}&\geq \prob{[W]} + \sum_{j=1}^{t-r}\frac{1}{j+1} \prob{[Q_j]}\\
&\geq 1-\frac{1}{2}\phi^{1/2}-\frac{1}{2}\phi.
\end{align}
Let $q'=1-\frac{1}{2}\phi^{1/2}-\frac{1}{2}\phi$. When $\phi^{1/2}+\phi<1$, it follows that $q'>0.5$. By using Hoeffding's inequality, we obtain
\begin{align*}
\prob{\left[\frac{1}{m}\sum_{k\in[m]}Y_k(r)<0.5\right]}\leq\exp(-2m(1/2-q')^2).
\end{align*}
Let $Z_k(j_0)=\sum_{j=j_0}^{t} X_k(j)$. In an analogous manner, we can prove that 
\begin{align*}
\prob{\left[\frac{1}{m}\sum_{k\in[m]}Z_k(l)<0.5\right]}\leq\exp(-2m(1/2-q')^2).
\end{align*}

Therefore, the probability of success of iteration $t$ may be bounded as
\begin{align*}
\prob{[D_t]}\leq& \prob{\left[\frac{1}{m}\sum_{k=1}^mY_k(r)<0.5\right]}+\prob{\left[\frac{1}{m}\sum_{k=1}^{m}Z_k(l)<0.5\right]}\leq 2e^{-2m(1/2-q')^2}.
\end{align*}
As a result, when $m\geq c\log\frac{2n}{\delta}$ with $c=\frac{2}{(1-2q')^2},$ where $q'=1-\frac{1}{2}\phi^{1/2}-\frac{1}{2}\phi$, we have $\prob{[\sigma\in\Sigma_0]}> 1-\delta$. 


\section{Other Lemmas and Proofs}\label{oths}
\begin{lemma} \label{extLemma1}
Let $\sigma_0$ be a ranking over $S$ and let $A \subseteq S$ be such that $A$ contains the elements ranked highest in $\sigma_0$. Consider a ranking $\sigma\sim$MM$(\sigma_0,\phi)$. Then, the marginal distribution of $\sigma$ over $S/A$ is the distribution $MM(\sigma_{0,S/A},\phi)$.  
\end{lemma}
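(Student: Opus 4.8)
The plan is to reduce the statement to a single combinatorial identity by exploiting the additivity of the Kendall $\tau$ distance over pairs. Write $B\triangleq S\setminus A$, and for a permutation $\sigma$ over $S$ split the inverted pairs of $(\sigma_0,\sigma)$ into three groups according to whether both endpoints lie in $A$, both lie in $B$, or the pair is mixed; denote the corresponding counts by $d_{AA}(\sigma)$, $d_{BB}(\sigma)$ and $d_{AB}(\sigma)$. Since every pair falls into exactly one group, $d_\tau(\sigma_0,\sigma)=d_{AA}(\sigma)+d_{BB}(\sigma)+d_{AB}(\sigma)$. The term $d_{BB}(\sigma)$ counts inversions among $B$-elements and hence equals $d_\tau(\sigma_{0,B},\sigma_B)$, depending on $\sigma$ only through the projection $\sigma_B$. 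Fixing a target order $\tau$ on $B$ and summing the Mallows weights over all $\sigma$ with $\sigma_B=\tau$ gives
\begin{align*}
\prob{[\sigma_B=\tau]}\;\propto\;\phi^{\,d_\tau(\sigma_{0,B},\tau)}\,C(\tau),\qquad C(\tau)\triangleq\!\!\sum_{\sigma:\,\sigma_B=\tau}\!\!\phi^{\,d_{AA}(\sigma)+d_{AB}(\sigma)}.
\end{align*}
It therefore suffices to prove that $C(\tau)$ does not depend on $\tau$: the marginal is then proportional to $\phi^{\,d_\tau(\sigma_{0,B},\tau)}$, which is exactly MM$(\sigma_{0,B},\phi)=$MM$(\sigma_{0,S/A},\phi)$.

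To show $C(\tau)$ is constant I would describe the permutations with $\sigma_B=\tau$ combinatorially. Fixing the relative order $\tau$ of the $B$-elements creates $|B|+1$ ``gaps'' (above all of $B$, between consecutive $B$-elements, and below all of $B$); a permutation $\sigma$ with $\sigma_B=\tau$ is specified precisely by an assignment of each element of $A$ to a gap together with a relative order of the $A$-elements, and this parametrization is identical for every choice of $\tau$. The crucial point is that $A$ consists of the top elements of $\sigma_0$, so every mixed pair $(x,y)$ with $x\in A$, $y\in B$ satisfies $\sigma_0(x)<\sigma_0(y)$; such a pair is therefore inverted in $\sigma$ exactly when $\sigma(x)>\sigma(y)$, i.e. when the $A$-element $x$ sits below the $B$-element $y$. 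Consequently $d_{AB}(\sigma)=\sum_{x\in A}(\text{number of }B\text{-elements above }x)$ is a function of the gap-assignment alone, while $d_{AA}(\sigma)$ is a function of the relative order of $A$ alone; neither depends on $\tau$. Running the sum over the common set of gap-assignments and $A$-orders thus produces the same value of $C(\tau)$ for every $\tau$, as required.

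The main obstacle is exactly this final independence step, and it is here that the hypothesis ``$A$ contains the highest-ranked elements'' is indispensable: for an arbitrary subset $A$ the mixed pairs would carry both orientations in $\sigma_0$, so whether a given gap-assignment produces an inversion would depend on the identities of the surrounding $B$-elements, making $d_{AB}$ -- and hence $C(\tau)$ -- genuinely $\tau$-dependent. The uniform orientation guaranteed by the hypothesis is precisely what kills this dependence. Finally, the symmetric statement in which $A$ consists of the lowest-ranked elements follows by the same argument after reversing all orders (equivalently, by applying the present case to the reversed centroid), which justifies the ``lowest rank'' versions invoked elsewhere in the appendix.
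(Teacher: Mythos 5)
Your proof is correct, and it reaches the conclusion by a somewhat different route than the paper. The paper reduces to the case $A=\{x\}$ with $x$ the single top-ranked element and then inducts: for a fixed projection $\pi$ of $\sigma$ onto $S\setminus\{x\}$ it parametrizes the preimages by the insertion position $j$ of $x$, observes that $d_\tau(\sigma^{(j)},\sigma_0)=(j-1)+d_\tau(\pi,\sigma_{0,S/\{x\}})$, and sums the resulting geometric series, which folds neatly into the ratio of normalization constants $Z_{|S|}/Z_{|S|-1}$. You instead handle all of $A$ at once via the three-way decomposition $d_\tau(\sigma_0,\sigma)=d_{AA}+d_{BB}+d_{AB}$ and the gap parametrization, showing that the factor $C(\tau)=\sum_{\sigma:\sigma_B=\tau}\phi^{d_{AA}(\sigma)+d_{AB}(\sigma)}$ is $\tau$-independent because $d_{AA}$ depends only on the relative order of $A$ and $d_{AB}$ only on the gap assignment, both of which range over index sets that do not depend on $\tau$. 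The two arguments hinge on the identical observation -- since every element of $A$ sits above every element of $B$ in $\sigma_0$, a mixed pair is inverted exactly when the $A$-element is placed below the $B$-element, so the cross-inversion count is blind to the internal order of $B$ -- but your version avoids the induction and makes the role of the hypothesis more transparent (you correctly identify that for a general $A$ the mixed pairs would carry both orientations and $C(\tau)$ would genuinely depend on $\tau$). What the paper's inductive version buys in exchange is an explicit value for the marginal's normalization constant, $Z_{|S/\{x\}|}$, rather than a proportionality statement; your argument recovers this implicitly since a distribution proportional to $\phi^{d_\tau(\sigma_{0,B},\cdot)}$ on permutations of $B$ is by definition MM$(\sigma_{0,B},\phi)$. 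Your closing remark on the lowest-ranked variant via order reversal matches the paper's observation.
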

\begin{proof}
It suffices to prove the result for $A=\{{x\}}$, where $x$ is the element ranked highest in $\sigma_0$, as this result may be applied inductively. Consider all permutations $\sigma$ such that for $\sigma_{S/\{x\}}=\pi$ and some $j\in[|S|]$, one has
\begin{align*}
\sigma^{-1}(t)=\left\{\begin{array}{lc} \pi^{-1}(t), & 1\leq t<j,  \\ 
\pi^{-1}(t-1), & j<t\leq |S|, \\
x, & t=j.
\end{array}\right.
\end{align*}
For simplicity of notation, we use $\sigma^{(j)}$ to denote a permutation with the above property. Then,
\begin{align*}
\prob{[\sigma_{S/\{x\}}=\sigma']}=&\sum_{j=1}^{|S|}\prob{[\sigma^{(j)}]}=\frac{1}{Z_{|S|}}\sum_{j=1}^{|S|}\phi^{d_{\tau}(\sigma^{(j)},\sigma_0)}=\frac{1}{Z_{|S|}}\sum_{j=1}^{|S|}\phi^{j-1+d_{\tau}(\sigma_{S/\{x\}}^{(j)},\sigma_{0,S/\{x\}})}\\
=&\frac{\sum_{j=1}^{|S|}\phi^{j-1}}{Z_{|S|}}\phi^{d_{\tau}(\pi,\sigma_{0,S/\{x\}})}=\frac{1}{Z_{|S/\{x\}|}}\phi^{d_{\tau}(\pi,\sigma_{0,S/\{x\}})},
\end{align*}
where $Z_{n}=\prod_{i=1}^{n-1}\sum_{j=0}^i\phi^j$ denotes the normalization constant in the Mallows distribution of permutations with 
$n$ elements. 
\end{proof}
Observe that the same result holds when $A$ is assumed to contain the lowest ranked element in $\sigma_0$.

\section{Supplementary Algorithms}\label{supplementalg}
\subsection{Efficient Algorithms for Computing the Mode/Median for Partial Ranking Aggregation}

In Section 3 of the main text which discusses partial ranking aggregation, we pointed out that one can efficiently compute the 
voting function $V_x(y)$, and hence the mode/median $\hat{c}$ as well. Algorithm VII.1 of this text explains how to efficiently compute $V_x(y)$, provided that for fixed $k,x$, $v_{k\rightarrow x}(y)$ is positive over a continuous interval, or more precisely, when $[x-c_{\sigma_k}(x),\, x-c_{\sigma_k'}(x)]$. Algorithm VII.1 has complexity $\mathcal{O}(m+x)$ and the computation of the mode/median of the component $\hat{c}(x)$ requires $\mathcal{O}(x)$ time. Therefore, the total complexity of Algorithm 2 of the main text for partial rankings equals $\mathcal{O}(mn+n^2)$.

\begin{table}[htb]
\centering
\begin{tabular}{l}
\hline
\label{alg:LCaggregationPRvote}
\textbf{Algorithm VII.1: }\\ \textbf{Computing $\{V_{x}(y)\}_{y\in[x]}$ when $v_{k\rightarrow x}(y)$ is positive over $[x-c_{\sigma_k}(x),\, x-c_{\sigma_k'}(x)]$}\\
\textbf{Input:} $\bfc_{\sigma_k}, \bfc_{\sigma_k}'$, votes $v_{k\rightarrow x}(y)=v_{k\rightarrow x}1_{x-\bfc_{\sigma_k}'\leq y\leq x-\bfc_{\sigma_k}}$, $\forall\;k\in[m]$; \\
\ 1: \textbf{Initialize} $V_{x}(y)=0,$ for all $y\in[x+1]$; \\
\ 2: \textbf{For $k$ from $1$ to $m$ do}\\
\ 3: \quad $V_{x}(x-\bfc_{\sigma_k}'(x))=V_{x}(x-\bfc_{\sigma_k}'(x))+v_{k\rightarrow x}$; \\
\ 4: \quad $V_{x}(x+1-\bfc_{\sigma_k}(x))=V_{x}(x+1-\bfc_{\sigma_k}(x))-v_{k\rightarrow x}$; \\
\ 5: \textbf{For $k$ from $2$ to $x+1$ do}\\
\ 6: \quad $V_{x}(y)=V_{x}(y-1)+V_{x}(y)$; \\
\ 7: \textbf{Output:} $V_{x}(y)$;\\
\hline
\end{tabular}
\end{table}

\subsection{A Kemeny-Distance Optimal Algorithm for Transforming Permutations into Partial Rankings}
In Section 3 of the main text pertaining to partial ranking aggregation, we pointed out that one can optimally transform the permutation output of Algorithm 2 into a partial ranking. Algorithm VII.2 of this text explains how to perform this transform. 
In the description of the algorithm, we used $a:b =(a,a+1,...,b)$ where $a,b\in\mathbb{Z},\,b\geq a$. 
For a vector $V$, we used $V(a:b)$ to denote $(V(a),V(a+1),...,V(b))$. Algorithm VII.2 has complexity $\mathcal{O}(mn^2+n^3)$.

\begin{table}[htb]
\centering
\begin{tabular}{l}
\hline
\label{alg:LCaggregationPRvote}
\textbf{Algorithm VII.2: }\\ \textbf{Transforms a Permutation into a Partial Ranking that is Kemeny-Distance Optimal}\\
\textbf{Input:} Permutation $\sigma$, Set of partial rankings $\Sigma$; \\
\ 1: \textbf{Initialize} BucketSize$=(1,1,...,1)\in \mathbb{N}^n$;\\
\ 2: \textbf{Initialize} $W=\{w_{ij}\}_{i,j\in [n]}$ where $w_{ij}=\frac{1}{m}\sum_{k\in[m]}1_{\sigma_{k}(\sigma^{-1}(i))<\sigma_{k}(\sigma^{-1}(j))}$;\\
\ 3: [Val, BucketSize]=Dynamic-Programming($W$, BucketSize);\\
\ 4: Construct a partial ranking $\sigma'$ by putting the lowest BucketSize(1) many elements of $\sigma$ into $\mathcal{B}_{1}(\sigma')$; \\
\quad \quad Proceed by taking BucketSize(2) many elements of $\sigma$ and placing them into $\mathcal{B}_{2}(\sigma')$ and so on; \\
\ 5: \textbf{Output:} $\sigma'$.\\
\hline
\textbf{Dynamic-Programming($W$, BucketSize)} \\
\ 1: $n'=$length(BucketSize); \\
\ 2: \textbf{If $n'=1$} \\
\quad \quad \quad return [0, BucketSize]; \\
\ 3: $s=\lfloor n'/2\rfloor;$\\
When $\sigma^{-1}(s)$ and $\sigma^{-1}(s+1)$ are in different buckets (4-6)  \\
\ 4: [Val1, BucketSize1]=Dynamic-Programming($W(1:s,1:s)$, BucketSize($1:s$)); \\
\ 5: [Val2, BucketSize2]=Dynamic-Programming($W(s+1:n',s+1:n')$, BucketSize($s+1:n$)); \\
\ 6: Val-div=Val1+Val2+$\sum_{i=s+1}^{n'}\sum_{j=1}^s w_{ij}+\frac{1}{2}\sum_{i=s+1}^{n'}\sum_{j=1}^s (\text{BucketSize}(i)*\text{BucketSize}(j)-w_{ji}-w_{ij})$.\\
When $\sigma^{-1}(s)$ and $\sigma^{-1}(s+1)$ are in the same bucket (7-13)  \\
\ 7: $w_{si}=w_{si}+w_{(s+1)i}$, for all $i\in[n']$;\\
\ 8: $w_{is}=w_{is}+w_{i(s+1)}$, for all $i\in[n']$; \\
\ 9: Val3=$1/2*w_{ss}$ ; \\
\ 10: Construct $W'\in \mathbb{R}^{n'-1\times n'-1}$ by deleting the $s+1$th row and $s+1$th column of $W$; \\
\ 11: Construct  newBucketSize: \\
\quad \quad \quad \quad \quad \quad \quad \quad \quad \quad \quad  newBucketSize($i$)=BucketSize($i$) for $1\leq i\leq s$; \\
\quad \quad \quad \quad \quad \quad \quad \quad \quad \quad \quad newBucketSize($i$)=BucketSize($i+1$) for $s+1\leq i\leq n'-1$; \\
\quad \quad \quad \quad  \quad \quad \quad \quad \quad \quad \quad newBucketSize($s$)=BucketSize($s$)+BucketSize($s+1$); \\
\ 12:  [Val4, BucketSize3]=Dynamic-Programming($W'$, newBucketSize);\\
\ 13: Val-con= Val3+Val4; \\
\ 14: \textbf{if} Val-con$>$Val-div, \\
\quad \quad \quad Construct BucketSize4 via concatenation of BucketSize1 and BucketSize2; \\
\quad \quad \quad return [Val-div, BucketSize4]; \\
\ 15: \textbf{else} return [Val-con, BucketSize3]; \\
\hline
\end{tabular}
\end{table}


\end{document}